\setlist{leftmargin=*,itemsep=11pt,parsep=-7pt}
\newcommand{\cmark}{\ding{51}}%
\newcommand{\xmark}{\ding{55}}%
\theoremstyle{plain}
\newtheorem{theorem}{Theorem}
\newtheorem{lemma}[theorem]{Lemma}
\theoremstyle{definition} %
\newtheorem*{remark}{Remark}
\newtheorem{definition}{Definition}
\theoremstyle{plain}
\newtheorem{reptheoreminner}{Theorem}
\newenvironment{reptheorem}[1]{%
  \begin{reptheoreminner}
}{\end{reptheoreminner}}
\theoremstyle{definition}
\newtheorem{repdefinitioninner}{Definition}
\newenvironment{repdefinition}[1]{%
  \begin{repdefinitioninner}
}{\end{repdefinitioninner}}
\newcommand*{\defeq}{\mathrel{\rlap{%
			\raisebox{0.3ex}{$\m@th\cdot$}}%
		\raisebox{-0.3ex}{$\m@th\cdot$}}%
	=}
\newcommand*{\eqdef}{=\mathrel{\rlap{%
			\raisebox{0.3ex}{$\m@th\cdot$}}%
		\raisebox{-0.3ex}{$\m@th\cdot$}}}
\definecolor{sbblue}{HTML}{4878d0}
\definecolor{sbred}{HTML}{d65f5f}
\definecolor{sbpurple}{HTML}{926db1}
\definecolor{sbgreen}{HTML}{6acc64}
\definecolor{sbbluedeep}{HTML}{4c72b0}
\definecolor{sbreddeep}{HTML}{c44e52}
\definecolor{sbpurpledeep}{HTML}{8073b0}
\definecolor{sbgreendeep}{HTML}{55a868}
\definecolor{sborange}{HTML}{ee8542}
\definecolor{sborangedeep}{HTML}{dd8452}
\newcommand{\sd}[1]{\textsubscript{\color{black!50}{\textpm \SI[round-precision=1, round-mode=places]{#1}{}}}}
\newcommand{\Figref}[1]{Fig.\,\ref{#1}}
\newcommand{\Figsref}[2]{Figs.\,\ref{#1}~and\,\ref{#2}}
\newcommand{\Figssref}[3]{Figs.\,\ref{#1},\,\ref{#2},~and\,\ref{#3}}
\newcommand{\Tabref}[1]{Tab.\,\ref{#1}}
\newcommand{\Secref}[1]{Sec.\,\ref{#1}}
\newcommand{\Eqref}[1]{Eq.\,\ref{#1}}  %
\newcommand{\Thmref}[1]{Thm.\,\ref{#1}}
\newcommand{\Lemref}[1]{Lem.\,\ref{#1}}
\newcommand{\Defref}[1]{Def.\,\ref{#1}}
\newcommand{\Appref}[1]{App.\,\ref{#1}}
\newcommand{\newcolor}[1]{\color{#1}}
\renewcommand{\newcolor}[1]{}
\newcommand{\colorgtenc}{\newcolor{green!70!black}}
\newcommand{\colorgtdec}{\newcolor{blue!60!white}}
\newcommand{\colormodenc}{\newcolor{green!50!black}}
\newcommand{\colormoddec}{\newcolor{blue!90!black}}
\newcommand{\colorother}{\newcolor{violet}}
\newcommand{\coloridx}{\newcolor{orange!80!black}}
\newcommand{\R}{{{\colorother\mathbb{R}}}}
\newcommand{\E}{{{\colorother\mathbb{E}}}}
\newcommand{\vectornot}[1]{\boldsymbol #1}
\newcommand{\indexing}[1]{_{{\coloridx #1}}}
\newcommand{\baseidx}{k}
\newcommand{\idx}{{{\colorother\baseidx}}}
\newcommand{\rest}{^{{\coloridx S}}}
\newcommand{\modelled}[1]{\hat{#1}}
\newcommand{\imagined}[1]{#1\colorother{'}}
\newcommand{\basez}{\vectornot{z}}
\newcommand{\z}{{{\colorgtdec\basez}}}
\newcommand{\zk}[1][\baseidx]{{\z\indexing{#1}}}
\newcommand{\modz}{{\colormoddec\modelled{\basez}}}
\newcommand{\modzk}[1][\baseidx]{{\modz\indexing{#1}}}
\newcommand{\iz}{{\imagined{\z}}}
\newcommand{\sample}[1]{^{{\coloridx(#1)}}}
\newcommand{\samplez}[1]{{\z\sample{#1}}}
\newcommand{\samplezk}[2]{{\z\sample{#1}\indexing{#2}}}
\newcommand{\samplemodz}[1]{{\modz\sample{#1}}}
\newcommand{\basex}{\vectornot{x}}
\newcommand{\x}{{\colorgtdec\basex}}
\newcommand{\ix}{{\imagined{\x}}}
\newcommand{\m}{{\vectornot{m}}}
\newcommand{\basesZ}{\mathcal{Z}}
\newcommand{\sZ}{{\colorgtdec\basesZ}}
\newcommand{\isZ}{{\imagined{\sZ}}}
\newcommand{\sZk}[1][\baseidx]{{\sZ\indexing{#1}}}
\newcommand{\srestZ}{{\sZ\rest}}
\newcommand{\srestZk}[1][\baseidx]{{\sZ\rest\indexing{#1}}}
\newcommand{\modsZ}{{\colormoddec\modelled{\basesZ}}}
\newcommand{\modsrestZ}{{\modsZ\rest}}
\newcommand{\modsrestZk}[1][\baseidx]{{\modsZ\rest\indexing{#1}}}
\newcommand{\basesX}{\mathcal{X}}
\newcommand{\sX}{{\colorgtdec\basesX}}
\newcommand{\srestX}{{{\sX\rest}}}
\newcommand{\modsX}{{\colormoddec\modelled{\basesX}}}
\newcommand{\basegtenc}{\vectornot{g}}
\newcommand{\basegtdec}{\vectornot{f}}
\newcommand{\gtdec}{{{\colorgtdec\basegtdec}}}
\newcommand{\modenc}{{{\colormodenc\modelled{\basegtenc}}}}
\newcommand{\moddec}{{{\colormoddec\modelled{\basegtdec}}}}
\newcommand{\baselatrec}{\vectornot{h}}
\newcommand{\latrec}{{{\colorother\baselatrec}}}
\newcommand{\latreck}[1][\baseidx]{{\latrec\indexing{#1}}}
\newcommand{\baseobjdec}{\vectornot{\varphi}}
\newcommand{\objdec}{{\colorgtdec\baseobjdec}}
\newcommand{\modobjdec}{{\colormoddec\modelled{\baseobjdec}}}
\newcommand{\symm}[1][K]{\mathcal{S}(#1)}
\newcommand{\partialfuncarrow}{\rightarrowtail}
\newcommand{\contdiffm}[1][1]{{C^#1}}
\newcommand{\contdiff}[1][1]{$\contdiffm[#1]$\xspace}
\newcommand{\loss}{{\colorother\mathcal L}}
\newcommand{\recloss}{\loss_\text{\coloridx rec}}
\newcommand{\consloss}{\loss_\text{\coloridx cons}}
\newcommand{\der}{{\mathrm{D}}}
\DeclareMathOperator{\Lin}{Lin}
\DeclareMathOperator{\dom}{dom}
\DeclareMathOperator{\rank}{rank}
\DeclareMathOperator{\supp}{supp}
\DeclareMathOperator{\Diffeoclassbase}{Diffeo}
\newcommand{\Diffeoclass}[1][2]{\contdiffm[#1]\hspace{-0.1em}\Diffeoclassbase}
\newcommand{\pathfunction}{{{\colorother\vectornot{\phi}}}}
\newcommand{\constvector}{{{\colorother\vectornot{c}}}}
\newcommand{\pointlatrec}[2]{\modlatreck[#1]\big(\pathfunction(#2)\big)}
\newcommand{\pointder}[2][\idx]{\der\pointlatrec{#1}{#2}}
\newcommand{\pointpartial}[3][\idx]{\partial_{#2}\pointlatrec{#1}{#3}}
\newcommand{\coordenc}{{\colorgtenc\vectornot{q}}}
\newcommand{\dercoordenc}{{\colorgtenc\vectornot{Q}_\idx}}
\newcommand{\neighbourhoodsZ}{\mathcal{W}}
\newcommand{\aepair}{{\coloridx\big(\modenc,\moddec\hspace{.1em}\big)}}
\newcommand{\aepairpx}[1][p_\x]{{\coloridx\big(\modenc,\moddec, #1\big)}}
\newcommand{\substitutesZ}{{\colorother\tilde\basesZ}}
\newcommand{\substitutesX}{{\colorother\tilde\basesX}}
\newcommand{\modlatrec}{{\colormoddec\modelled{\baselatrec}}}
\newcommand{\modlatreck}[1][\baseidx]{{\modlatrec\indexing{#1}}}
\newcommand{\ilatrec}{{\imagined{\latrec}}}
\newcommand{\basediffeo}[2]{%
    \ifx&#1&#2%
    \else\contdiff[#1]-#2%
    \fi
}
\newcommand{\diffeo}[1][]{\basediffeo{#1}{diffeomorphism}\xspace}
\newcommand{\Diffeo}[1][]{\basediffeo{#1}{Diffeomorphism}\xspace}
\DeclareRobustCommand\onedot{\futurelet\@let@token\@onedot}
\def\@onedot{\ifx\@let@token.\else.\null\fi\xspace}
\def\eg{e.g\onedot} 
\def\ie{i.e\onedot} 
\def\cf{c.f\onedot} 
\def\wrt{w.r.t\onedot} 
\newcommand{\genprocesslong}{partially observed generative process\xspace}
\newcommand{\Genprocesslong}{Partially observed generative process\xspace}
\newcommand{\genprocessabbrv}{POGP\xspace}
\newcommand{\aemlong}{autoencoder\xspace}
\newcommand{\Aemlong}{Autoencoder\xspace}
\newcommand{\aemabbrv}{AE\xspace}
\title{Provable Compositional Generalization for Object-Centric Learning}
\author{%
    Thaddäus Wiedemer$^{1,2,3}$\footnotemark[1] \quad
    Jack Brady$^{1,2,3}$\footnotemark[1] \quad
    Alexander Panfilov$^{1, 2, 3}$\footnotemark[1] \quad
    Attila Juhos$^{1,2,3}$\footnotemark[1] \\
    \textbf{Matthias Bethge}$^{1,3}$ \quad
    \textbf{Wieland Brendel}$^{2,3,4}$
    \vspace{2mm}\\
    $^1$University of Tübingen \quad 
    $^2$Max Planck Institute for Intelligent Systems \quad\\
    $^3$Tübingen AI Center \quad
    $^4$ELLIS Institute Tübingen
    \vspace{2mm}\\
    {\tt\small \{thaddaeus.wiedemer, jack.brady, attila.juhos\}@tuebingen.mpg.de}
}
\begin{document}

\renewcommand*{\thefootnote}{\fnsymbol{footnote}}
\footnotetext[1]{Equal contribution, order decided by dice roll.}
\footnotetext[0]{Code at \url{github.com/brendel-group/objects-compositional-generalization}}
\maketitle

\begin{abstract}
    Learning representations that generalize to novel compositions of known concepts is crucial for bridging the gap between human and machine perception.
One prominent effort is learning object-centric representations, which are widely conjectured to enable compositional generalization.
Yet, it remains unclear when this conjecture will be true, as a principled theoretical or empirical understanding of compositional generalization is lacking.
In this work, we investigate when compositional generalization is guaranteed for object-centric representations through the lens of identifiability theory.
We show that autoencoders that satisfy structural assumptions on the decoder and enforce encoder-decoder consistency will learn object-centric representations that provably generalize compositionally.
We validate our theoretical result and highlight the practical relevance of our assumptions through experiments on synthetic image data.

\end{abstract}

\section{Introduction}\label{sec:intro}
Despite tremendous advances in machine learning, a large gap still exists between humans and machines in terms of learning efficiency and generalization~\citep{Tenenbaum2011HowTG,BEHRENS2018490,scholkopf2021toward}.
A key reason for this is thought to be that machines lack the ability to \emph{generalize compositionally}, which humans heavily rely on~\citep{Fodor1988ConnectionismAC,lake_ullman_tenenbaum_gershman_2017,Battaglia2018RelationalIB,goyal2020inductive,greff2020binding}.
Namely, humans are able to recompose previously learned knowledge to generalize to never-before-seen situations.

Significant work has thus gone into the problem of learning representations that can generalize compositionally. One prominent effort is \emph{object-centric representation learning}~\citep{burgessMONetUnsupervisedScene2019, greff2019multi, locatelloObjectCentricLearningSlot2020, Lin2020SPACE:, singh2021illiterate, elsayed2022savi++, seitzer2023bridging}, which aims to represent each object in an image via a distinct subset of the image's latent code.
Due to this modular structure, object-centric representations are widely conjectured to enable compositional generalization~\citep{Battaglia2018RelationalIB, kipf2019contrastive, greff2020binding, locatelloObjectCentricLearningSlot2020}.
Yet, it remains unclear when this conjecture is actually true because a theoretical understanding of compositional generalization for unsupervised object-centric representations is lacking, and empirical methods are frequently not scrutinized for their ability to generalize compositionally. Consequently, it is uncertain to what extent advancements in object-centric learning promote compositional generalization and what obstacles still need to be overcome. 

In this work, we take a step towards addressing this point by investigating theoretically when compositional generalization is possible in object-centric representation learning. To do this, we formulate compositional generalization as a problem of \emph{identifiability} under a latent variable model in which objects are described by subsets of latents called \emph{slots} (see \Figref{fig:intro} left).
Identifiability provides a rigorous framework to study representation learning, but previous results have only considered identifiability of  latents \emph{in-distribution} (ID), \ie, latents that generate the training distribution~\citep{Hyvrinen2023NonlinearIC}.
Compositional generalization, however, requires identifying the ground-truth latents not just ID, but also \emph{out-of-distribution} (OOD) for unseen combinations of latent slots.

We pinpoint the core challenges in achieving this form of identifiability in \Secref{sec:setup} and show how they can be overcome theoretically by autoencoder models which satisfy two properties: \emph{additivity}~(\Defref{def:additivity}) and \emph{compositional consistency}~(\Defref{def:compositional_consistency}).
Informally, additivity states that the latents are decoded as the sum of individual slot-wise decodings, while compositional consistency states that the encoder inverts the decoder ID as well as OOD.
When coupled with previous identifiability results from \citet{Brady2023ProvablyLO}, we prove that autoencoders that satisfy these assumptions will learn object-centric representations which \emph{provably generalize compositionally} (\Thmref{theo:compositional_generalization}).

We discuss implementing additivity in practice and propose a regularizer that enforces compositional consistency by ensuring that the encoder inverts the decoder on novel combinations of ID latent slots (\Secref{sec:method}).
We use this to empirically verify our theoretical results in \Secref{sec:experiments_theory} and find that additive autoencoders that minimize our proposed regularizer on a multi-object dataset are able to generalize compositionally.
In \Secref{sec:experiments_slot_attention}, we study the importance of our theoretical assumptions for the popular object-centric model Slot Attention~\citep{locatelloObjectCentricLearningSlot2020} on this dataset.

\begin{figure}[t]
    \centering
    \includegraphics[width=\linewidth]{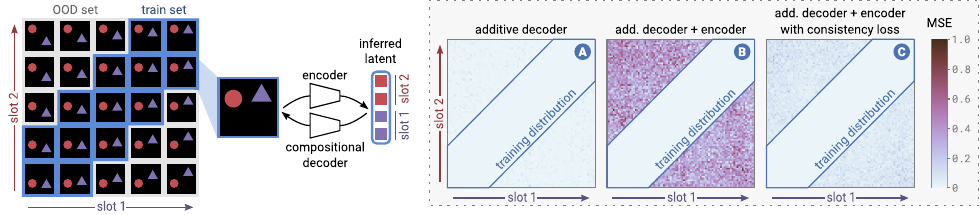}
    \vspace{-6mm}
    \caption{
        \textbf{Compositional generalization in object-centric learning}.
        We assume a latent variable model where objects in an image (here, a triangle and a circle) are described by latent \emph{slots}. Our notion of
        \emph{compositional generalization} requires a model to identify the ground-truth latent slots (\emph{slot identifiability}, \Defref{def:slot_identifiability}) on the train distribution and to transfer this identifiability to out-of-distribution (OOD) combinations of slots (\Defref{def:compositional_generalization}).
        An autoencoder achieves slot identifiability on the train distribution if its decoder is \emph{compositional} (\Thmref{theo:slot_identifiability_restricted}). Further, we prove that decoders that are \emph{additive} are able to generalize OOD as visualized in \textbf{(A)} via the isolated decoder reconstruction error over a 2D projection of the latent space (see \Appref{app:rec_error}). 
        However, this does not guarantee that the entire model generalizes OOD, as the encoder will generally not invert the decoder on OOD slot combinations, leading to a large overall reconstruction error \textbf{(B)}.
        To address this, we introduce a \emph{compositional consistency} regularizer (\Defref{def:compositional_consistency}), which allows the full autoencoder to generalize OOD (\textbf{C}, \Thmref{theo:compositional_generalization}).
    }\label{fig:intro}
\end{figure}

\paragraph{Notation} Vectors or vector-valued functions are denoted by bold letters. For vectors with factorized dimensionality (\eg, $\z$ usually from $\R^{KM}$) or functions with factorized output (usually $\modenc$ mapping to $\R^{KM}$), indexing with $\idx$ denotes the $\idx$-th contiguous sub-vector (\ie, $\zk \in \R^M$ or $\modenc_\idx(\x)\in \R^M$). Additionally, for a positive integer $K$ we write the set $\{1, \ldots, K\}$ as $[K]$.
\section{Problem Setup}\label{sec:setup}
Informally, we say that a model generalizes compositionally if it yields an object-centric representation for images containing \emph{unseen combinations} of \emph{known objects}, \ie, objects observed during training~\citep{zhao2022toward,fradyLearningGeneralizationCompositional2023,wiedemer2023compositional}. For example, a model trained on images containing a red square and others containing a blue triangle should generalize to images containing both objects simultaneously---even if this combination has not previously been observed.

To formalize this idea, we first define scenes of multiple objects through a latent variable model. Specifically, we assume that observations $\x$ of multi-object scenes are generated from latent vectors $\z$ by a \emph{diffeomorphic 
generator} 
$\gtdec: \sZ \rightarrow \sX$ mapping from a \emph{latent space} $\sZ$ to a \emph{data space} $\sX \subseteq \R^N$, \ie, $\x = \gtdec(\z)$ (see also \Appref{subsec:app_intro_diffeos}).
Each object in $\x$ should be represented by a unique sub-vector $\zk$ in the latent vector $\z$, which we refer to as a \emph{slot}.
Thus, we assume that the latent space $\sZ$ factorizes into $K$ slots $\sZk$ with $M$ dimensions each:
\begin{equation}
    \sZk \subseteq \R^M \quad\text{and}\quad \sZ = \sZk[1] \times \cdots \times \sZk[K] \subseteq \R^{KM}.
\end{equation}
A slot $\sZk$ contains all possible configurations for the $\idx$-th object, while $\sZ$ encompasses all possible combinations of objects.
For our notion of compositional generalization, a model should observe all possible configurations of each object but not necessarily all combinations of objects. This corresponds to observing samples generated from a subset $\srestZ$ of the latent space $\sZ$, where $\srestZ$ contains all possible values for each slot. We formalize this subset below.

\begin{definition}[Slot-supported subset]\label{def:marginal_support}
    For $\srestZ \subseteq \sZ = \sZk[1] \times \cdots \times \sZk[K]$, let $\srestZk \defeq \left\{\zk | \z \in \srestZ\right\}$.
    $\srestZ$ is said to be a \emph{slot-supported subset} of $\sZ$ if
    ${\srestZk = \sZk \text{ for any}\; \idx \in [K]}$.
\end{definition}

One extreme example of a slot-supported subset $\srestZ$ is the trivial case $\srestZ = \sZ$; another is a set containing the values for each slot \emph{exactly once} such that $\srestZ$ resembles a 1D manifold in $\R^{KM}$.

We assume observations $\x$ from a \emph{training space} $\srestX$ are generated by a slot-supported subset $\srestZ$, \ie, $\srestX \defeq \gtdec(\srestZ)$. The following generative process describes this:
\begin{equation}\label{eqn:generative_process}
    \x=\gtdec(\z), \quad \z \sim p_{\z}, \quad \supp(p_\z) = \srestZ.
\end{equation}
Samples from such a generative process are visualized in \Figref{fig:intro} for a simple setting with two objects described only by their y-coordinate. We can see that the training space contains each possible configuration for the two objects but not all possible combinations of objects. 

Now, assume we have an inference model which only observes data on $\srestX$ generated according to \Eqref{eqn:generative_process}. In principle, this model could be any sufficiently expressive diffeomorphism; however, we will assume it to be an \emph{autoencoder}, as is common in object-centric learning~\citep{yuan2023compositional}. Namely, we assume the model consists of a pair of differentiable functions: an \emph{encoder} $\modenc: \R^{N} \to \R^{KM}$ and a \emph{decoder} $\moddec: \R^{KM} \to \R^{N}$, which induce the \emph{inferred latent space} $\modsZ \defeq \modenc(\sX)$ and the \emph{reconstructed data space} $\modsX \defeq \moddec(\modsZ)$. The functions are optimized to invert each other on $\srestX$ by minimizing the reconstruction objective
\begin{equation}
\label{eq:lrec}
    \recloss(\srestX) = \recloss\aepairpx[\srestX] \defeq \E_{\x \sim p_\x} \Big[\big\Vert \moddec \big(\modenc(\x)\big) - \x\big\Vert_2^2\Big],\quad \supp(p_\x) = \srestX.
\end{equation}
We say that an \aemlong $\aepair$ produces an object-centric representation via $\modz \defeq \modenc(\x)$ if each inferred latent slot $\modz_j$ encodes all information from exactly one ground-truth latent slot $\zk$, \ie, the model separates objects in its latent representation.
We refer to this notion as \emph{slot identifiability}, which we formalize below, building upon \citet{Brady2023ProvablyLO}:

\begin{definition}[Slot identifiability]\label{def:slot_identifiability}
    Let $\gtdec:\sZ \to \sX$ be a \diffeo.
    Let $\srestZ$ be a slot-supported subset of $\sZ$.
    An autoencoder $\aepair$ is said to \emph{slot-identify $\z$ on $\srestZ$ \wrt $\gtdec$} via $\modz \defeq \modenc\big(\gtdec(\z)\big)$ if
    it minimizes $\recloss(\srestX)$ and there exists a permutation $\pi$ of $[K]$ and a set of \diffeo{s} 
    $\latreck: \zk[\pi(\idx)] \mapsto \modzk$ for any $\idx \in [K]$.
\end{definition}

Intuitively, by assuming $\aepair$ minimizes $\recloss(\srestX)$, we know that on the training space $\srestX$, $\modenc$ is a \diffeo with $\moddec$ as its inverse. This ensures that $\modz$ preserves all information from ground-truth latent $\z$. Furthermore, requiring that the slots 
$\modzk$ and $\zk[\pi(\idx)]$
are related by a \diffeo ensures that this information factorizes in the sense that each inferred slot contains \emph{only} and \emph{all} information from a corresponding ground-truth slot.\footnote{Note that when $\srestZ = \sZ$, we recover the definition of slot identifiability in \citet{Brady2023ProvablyLO}.}
We can now formally define what it means for an \aemlong to generalize compositionally.

\begin{definition}[Compositional generalization]\label{def:compositional_generalization}

Let $\gtdec:\sZ \to \sX$ be a \diffeo and $\srestZ$ be a slot-supported subset of $\sZ$. An \aemlong $\aepair$ that slot-identifies $\z$ on $\srestZ$ \wrt $\gtdec$ is said to \emph{generalize compositionally \wrt $\srestZ$}, if it also slot-identifies $\z$ on $\sZ$ \wrt $\gtdec$.
\end{definition}

This definition divides training an \aemlong that generalizes compositionally into two challenges.

\paragraph{Challenge 1: Identifiability}\label{sec:challenge1}
Firstly, the model must slot-identify the ground-truth latents on the slot-supported subset $\srestZ$.
Identifiability is generally difficult and is known to be impossible without further assumptions on the generative model~\citep{HYVARINEN1999429,locatello2019challenging}. The majority of previous identifiability results have addressed this by placing some form of statistical independence assumptions on the latent distribution $p_\z$~\citep{Hyvrinen2023NonlinearIC}. In our setting, however, $p_\z$ is only supported on $\srestZ$, which can lead to extreme dependencies between latents (\eg, see \Figref{fig:theory} where slots are related almost linearly on $\srestZ$).
It is thus much more natural to instead place assumptions on the generator $\gtdec$ to sufficiently constrain the problem, in line with common practices in object-centric learning that typically assume a structured decoder~\citep{yuan2023compositional}.

\paragraph{Challenge 2: Generalization}\label{sec:challenge2}
Even if we can train an autoencoder $\aepair$ that slot-identifies $\z$ \emph{in-distribution} (ID) on the slot-supported subset $\srestZ$, we still require it to also slot-identify $\z$ \emph{out-of-distribution} (OOD) on all of $\sZ$. Empirically, multiple prior works have demonstrated in the context of disentanglement that this form of OOD generalization does not simply emerge for models that can identify the ground-truth latents ID~\citep{montero2021the,montero022lost,schott2022visual}. From a theoretical perspective, OOD generalization of this form implies that the behavior of the generator $\gtdec$ on the full latent space $\sZ$ is completely determined by its behavior on $\srestZ$, which could essentially be a one-dimensional manifold. This will clearly not be the case if $\gtdec$ is an arbitrary function, necessitating constraints on its function class to make any generalization guarantees.
\section{Compositional Generalization in Theory}\label{sec:theory}

\begin{figure}[t]
    \centering
    \includegraphics[width=\linewidth]{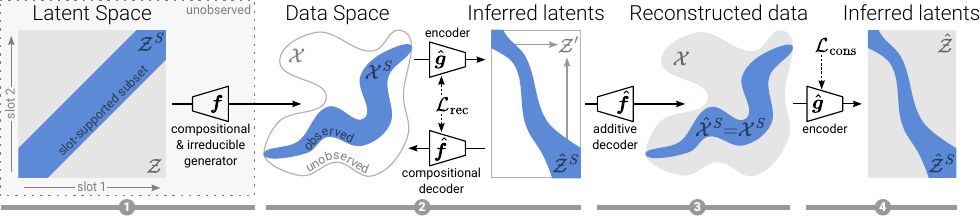}
    \vspace{-5mm}
    \caption{
        \textbf{Overview of our theoretical contribution}.
        (\textbf{1}) We assume access to data from a training space $\srestX \subseteq \sX$, which is generated from a \emph{slot-supported subset} $\srestZ$ of the latent space $\sZ$ (\Defref{def:marginal_support}), via a \emph{compositional} and \emph{irreducible} generator $\gtdec$. 
        (\textbf{2}) We show that an autoencoder with a \emph{compositional} decoder $\moddec$ trained via the reconstruction objective $\recloss$ on this data will \emph{slot-identify} ground-truth latents $\z$ on $\srestZ$ (\Thmref{theo:slot_identifiability_restricted}).
        Since the inferred latents $\modz$ slot-identify $\z$ ID on $\srestZ$, their slot-wise recombinations $\isZ$ slot-identify $\z$ OOD on $\sZ$. However, the encoder $\modenc$ is not guaranteed to infer OOD latents such that $\modenc(\sX) = \modsZ = \isZ$.
        (\textbf{3}) On the other hand, if the decoder $\moddec$ is additive, its reconstructions are guaranteed to generalize such that $\moddec(\isZ) = \sX$  (\Thmref{theo:decoder_generalization}).
        (\textbf{4}) Therefore, regularizing the encoder $\modenc$ to invert $\moddec$ using our proposed \emph{compositional consistency} objective $\loss_\text{cons}$ (\Defref{def:compositional_consistency}) enforces $\modsZ = \isZ$, thus enabling the model to \emph{generalize compositionally} (\Thmref{theo:compositional_generalization}). 
    }\label{fig:theory}
\end{figure}

In this section, we show theoretically how the ground-truth generator $\gtdec$ and \aemlong $\aepair$ can be constrained to address both \emph{slot identifiability} and \emph{generalization}, thereby facilitating compositional generalization (complete proofs and further details are provided in \Appref{sec:app_def_theorems_proofs}).

To address the problem of slot identifiability, \citet{Brady2023ProvablyLO} proposed to constrain the generator~$\gtdec$ via assumptions on its Jacobian, which they called \emph{compositionality} and \emph{irreducibility}. Informally, compositionality states that each image pixel is \emph{locally} a function of at most one latent slot, while irreducibility states that pixels belonging to the same object share information.
We relegate a formal definition of irreducibility to \Appref{subsec:app_comp_irr} and only restate the definition for compositionality.

\begin{definition}[Compositionality]\label{def:compositional}
    A differentiable $\gtdec: \sZ \rightarrow \R^{N}$ is called \emph{compositional} in $\z \in \sZ$ if
    \begin{equation}\label{eq:compositional}
        \frac{\partial \gtdec_n}{\partial \zk[k]}(\z) \neq 0 \implies \frac{\partial \gtdec_n}{\partial \zk[j]}(\z) = 0, \quad\text{for any $k,j \in [K]$, $k\neq j$ and any $n \in [N]$.}
    \end{equation}
\end{definition}

For a generator $\gtdec$ satisfying these assumptions on $\sZ$, \citet{Brady2023ProvablyLO} showed that an autoencoder~$\aepair$ with a compositional decoder (\Defref{def:compositional}) will slot-identify $\z$ on $\sZ$ \wrt $\gtdec.$
This result is appealing for addressing Challenge~\hyperref[sec:challenge1]{1} since it does not rely on assumptions on $p_\z$; however, it requires that the training space $\srestX$ is generated from the entire latent space $\sZ$. We here show an extension for cases when the training space $\srestX$ arises from a convex, slot-supported \emph{subset} $\srestZ$.

\begin{theorem}[Slot identifiability on slot-supported subset]\label{theo:slot_identifiability_restricted}
    Let $\gtdec: \sZ \rightarrow \sX$ be a compositional and irreducible \diffeo.
    Let $\srestZ$ be a convex, slot-supported subset of $\sZ$.
    An autoencoder~$\aepair$ that minimizes $\recloss(\srestX)$ for $\srestX = \gtdec(\srestZ)$ and whose decoder $\moddec$ is compositional on $\modenc(\srestX)$, slot-identifies $\z$ on $\srestZ$ \wrt $\gtdec$ in the sense of \Defref{def:slot_identifiability}.
\end{theorem}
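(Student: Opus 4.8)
The plan is to follow the identifiability proof of \citet{Brady2023ProvablyLO} for the case $\srestZ=\sZ$, noting that its only use of the product structure of $\sZ$ is to pass from a pointwise statement about Jacobians to a global statement about functions, and to replace that one step by an argument that needs only convexity of $\srestZ$ together with slot-supportedness.

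First I would record the consequences of the reconstruction objective. Since $\gtdec$ is itself a compositional \diffeo, $\recloss(\srestX)$ can be driven to zero (for instance by $(\gtinvdec,\gtdec)$), so any minimizer satisfies $\moddec\big(\modenc(\x)\big)=\x$ on $\srestX$. Writing $\latrec\defeq\modenc\circ\gtdec$ with slot components $\latreck$, this gives $\moddec\circ\latrec=\gtdec$ on $\srestZ$; hence $\latrec$ is injective on $\srestZ$ with image $\modenc(\srestX)$, and differentiating yields $\der\moddec\big(\latrec(\z)\big)\,\der\latrec(\z)=\der\gtdec(\z)$. As $\der\gtdec(\z)$ has full column rank, $\der\latrec(\z)$ is invertible at every $\z\in\srestZ$, so $\latrec$ is a \diffeo onto $\modenc(\srestX)$ and $\moddec=\gtdec\circ\latrec^{-1}$ there.

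Next I would import the pointwise part of their argument. Grouping the rows of $\der\gtdec$ and of $\der\moddec$ by the single slot each pixel depends on---well defined exactly by compositionality of $\gtdec$ on $\srestZ$ and of $\moddec$ on $\modenc(\srestX)$ (\Defref{def:compositional})---and the rows and columns of $\der\latrec$ into $K$ slot-blocks, the identity $\der\moddec\,\der\latrec=\der\gtdec$ forces, at each $\z\in\srestZ$, the block Jacobian $\der\latrec(\z)$ to have exactly one nonzero block per block-row and per block-column; irreducibility of $\gtdec$ is what guarantees the induced slot assignment is a genuine permutation of $[K]$ and that the surviving diagonal blocks have rank $M$. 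This reasoning invokes the hypotheses only at $\z$ and at $\latrec(\z)\in\modenc(\srestX)$, so it does not use that $\srestZ$ is a product. By continuity of $\der\latrec$ and connectedness of $\srestZ$ (from convexity), the permutation is constant; call it $\pi$, so $\partial\latreck/\partial\zk[j]=0$ on $\srestZ$ whenever $j\neq\pi(\idx)$.

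The new ingredient---and the step I expect to be the main obstacle---is converting this pointwise block structure into the global factorization required by \Defref{def:slot_identifiability}. Over a product domain one reads off immediately that $\latreck$ depends only on $\zk[\pi(\idx)]$; over a general slot-supported subset this can fail, since a level set of $\z\mapsto\zk[\pi(\idx)]$ in $\srestZ$ may be disconnected or even a point. Convexity repairs this: if $\z,\z'\in\srestZ$ agree in slot $\pi(\idx)$, the segment $[\z,\z']$ stays in $\srestZ$ and its direction has vanishing slot-$\pi(\idx)$ component, so $\tfrac{d}{dt}\latreck\big(\z+t(\z'-\z)\big)\equiv 0$ by the block structure and $\latreck(\z)=\latreck(\z')$. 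Hence $\latreck$ depends only on $\zk[\pi(\idx)]$, and since $\srestZk[\pi(\idx)]=\sZk[\pi(\idx)]$ by slot-supportedness (\Defref{def:marginal_support}) it is a genuine map $\zk[\pi(\idx)]\mapsto\modzk$. It remains to check each such $\latreck$ is a \diffeo: differentiability is immediate, its derivative is the rank-$M$ diagonal block from the previous paragraph (invertible via irreducibility, which already makes the slot-$\pi(\idx)$ columns of $\der\gtdec$ have rank $M$ and factor through it), and injectivity with smoothness of the inverse descend from the global bijectivity of $\latrec=\modenc\circ\gtdec$ onto $\modenc(\srestX)$. Collecting the $\latreck$ and $\pi$ establishes slot identifiability of $\z$ on $\srestZ$ in the sense of \Defref{def:slot_identifiability}. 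I expect the two genuinely delicate points to be this globalization, which truly leans on convexity rather than a product structure, and the verification that the slot maps come out as honest diffeomorphisms and not merely local ones---which is precisely where the bijectivity of the autoencoder on $\srestX$ must be used.
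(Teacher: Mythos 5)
Your proposal follows essentially the same route as the paper's proof: establish that $\modlatrec=\modenc\circ\gtdec$ is a \diffeo via the chain-rule identity $\der\moddec\,\der\modlatrec=\der\gtdec$, import the pointwise block-permutation structure of $\der\modlatrec$ from Prop.~3 of \citet{Brady2023ProvablyLO}, use connectedness of the convex set $\srestZ$ to make the permutation global, and then integrate along straight-line segments (which stay in $\srestZ$ by convexity) to show each $\modlatreck$ depends only on $\zk[\pi(\idx)]$. The argument is correct and matches the paper's in all essential steps, including the identification of convexity (rather than product structure) as the ingredient that makes the globalization go through.
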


\Thmref{theo:slot_identifiability_restricted} solves Challenge~\hyperref[sec:challenge1]{1} of slot identifiability on the slot-supported subset $\srestZ$, but to generalize compositionally, we still need to address Challenge~\hyperref[sec:challenge2]{2} and extend this to all of $\sZ$. Because we have slot identifiability on $\srestZ$, we know each ground-truth slot and corresponding inferred slot are related by a diffeomorphism $\latreck$. Since $\latreck$ is defined for all configurations of slot $\zk[\pi(\idx)]$, the representation which slot-identifies $\z = (\zk[1], \ldots, \zk[K])$ for any combination of slots (ID or OOD) in $\sZ$ is given by
\begin{equation}\label{eq:z_prime}
    \iz = \big( \latreck[1](\zk[\pi(1)]), \ldots, \latreck[K](\zk[\pi(K)]) \big),
    \quad
    \isZ = \latreck[1](\sZk[\pi(1)]) \times \cdots \times \latreck[K](\sZk[\pi(K)]).
\end{equation}
Therefore, for an autoencoder to generalize its slot identifiability from $\srestZ$ to $\sZ$, it should match this representation such that for any $\z \in \sZ$,
\begin{equation}\label{eq:autoenc_gen}
    \modenc \big( \gtdec(\z) \big) = \iz
    \quad\text{and}\quad
    \moddec ( \iz ) = \gtdec(\z).
\end{equation}
We aim to satisfy these conditions by formulating properties of the decoder $\moddec$ such that it fulfills the second condition, which we then leverage to regularize the encoder $\modenc$ to fulfill the first condition.

\subsection{Decoder Generalization via Additivity}\label{sec:decoder_generalization}
We know from \Thmref{theo:slot_identifiability_restricted} that $\moddec$ renders each inferred slot $\latreck[k](\z_{\pi(k)})$ correctly to a corresponding object in $\x$ for all possible values of $\z_{\pi(k)}$. Furthermore, because the generator $\gtdec$ satisfies compositionality (\Defref{def:compositional}), we know that these \emph{slot-wise renders} should not be affected by changes to the value of any other slot $\zk[j]$.
This implies that for $\moddec$ to satisfy \Eqref{eq:autoenc_gen}, we only need to ensure that its slot-wise renders remain invariant when constructing $\iz$ with an OOD combination of slots $\z$. We show below that \emph{additive} decoders can achieve this invariance.

\begin{definition}[Additive decoder]\label{def:additivity}
For an autoencoder $\aepair$ the decoder $\moddec$ is said to be \emph{additive} if
\begin{equation}\label{eq:additivity}
    \moddec(\z) = \sum_{\idx=1}^K\objdec_\idx(\modzk),
    \quad\text{where}\; \objdec_\idx: \R^M \to \R^N
    \text{ for any } \idx \in [K]\;\text{and}\; \modz \in \R^{KM}.
\end{equation}
\end{definition}
We can think of an additive decoder $\moddec$ as rendering each slot $\modzk$ to an intermediate image via \emph{slot functions} $\objdec_\idx$, then summing these images to create the final output. These decoders are expressive enough to represent compositional generators (see \Appref{subsec:app_additivity}). Intuitively, they globally remove interactions between slots such that the correct renders learned on inferred latents of $\srestZ$ are propagated to inferred latents of the entire $\sZ$. This is formalized with the following result.

\begin{theorem}[Decoder generalization]\label{theo:decoder_generalization}
     Let $\gtdec: \sZ \rightarrow \sX$ be a compositional  \diffeo and $\srestZ$ be a slot-supported subset of $\sZ$. Let $\aepair$ be an autoencoder that slot-identifies $\z$ on $\srestZ$ \wrt $\gtdec$. If the decoder $\moddec$ is additive, then it generalizes in the following sense: $\moddec(\iz) = \gtdec(\z)$ for any $\z \in \sZ$, where $\iz$ is defined according to \Eqref{eq:z_prime}.
 \end{theorem}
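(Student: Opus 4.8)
The plan is to read the desired identity off directly from slot identifiability on $\srestZ$, and then extend it to all of $\sZ$ by playing compositionality of $\gtdec$ against additivity of $\moddec$. On $\srestZ$ the identity is immediate: since $\aepair$ slot-identifies $\z$ on $\srestZ$ it minimizes $\recloss(\srestX)$, hence $\moddec(\modenc(\x)) = \x$ for every $\x \in \srestX = \gtdec(\srestZ)$, i.e.\ $\moddec(\modenc(\gtdec(\z))) = \gtdec(\z)$ for all $\z \in \srestZ$; and the second part of \Defref{def:slot_identifiability}, $\modenc_\idx(\gtdec(\z)) = \latreck(\zk[\pi(\idx)])$ for $\idx \in [K]$, is exactly the statement $\modenc(\gtdec(\z)) = \iz$ on $\srestZ$. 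Composing, $\moddec(\iz) = \gtdec(\z)$ for all $\z \in \srestZ$. After relabeling inferred slots I take $\pi = \Id$; then by \Defref{def:additivity} the map $\z \mapsto \moddec(\iz)$ equals $\sum_{\idx=1}^{K} \objdec_\idx(\latreck(\zk))$, so it is \emph{additive} in the ground-truth slots and its partial derivative in $\zk$ depends on $\zk$ alone.

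The key structural input I would extract from compositionality of $\gtdec$ is that $\gtdec$ is globally slot-wise on the product $\sZ = \sZk[1] \times \cdots \times \sZk[K]$: the slot on which a fixed output coordinate $\gtdec_n$ depends cannot change as $\z$ varies, since a change would have to pass through a region on which $\gtdec_n$ is locally constant, and continuity of the Jacobian, connectedness of the slot fibers, and the full-rank condition (from $\gtdec$ being a \diffeo) then force a contradiction; this I would establish along the lines of the analysis of compositional maps in \citet{Brady2023ProvablyLO}. Consequently $\gtdec$ is itself additive with slot functions supported on disjoint pixel sets, and — because $\srestZ$ is slot-supported (\Defref{def:marginal_support}), so every value of every slot already occurs inside $\srestZ$ — such a slot-wise $\gtdec$ is completely determined on all of $\sZ$ by its restriction to $\srestZ$.

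It then remains to reconcile the two additive representations that coincide on $\srestZ$, namely $\sum_{\idx} \objdec_\idx(\latreck(\zk)) = \gtdec(\z)$ with the right-hand side slot-wise. Using that the slot-$\idx$ part of $\gtdec$ is supported only on slot-$\idx$ pixels, together with slot-supportedness, I would argue coordinate by coordinate that each render $\objdec_\idx \circ \latreck$ must act — up to an additive constant — only on the pixels assigned to slot $\idx$, so that $\z \mapsto \moddec(\iz)$ is also slot-wise with exactly the single-slot functions of $\gtdec$. Two slot-wise maps that agree on the slot-supported set $\srestZ$ agree on all of $\sZ$, which gives $\moddec(\iz) = \gtdec(\z)$ for every $\z \in \sZ$, with $\iz$ as in \Eqref{eq:z_prime}.

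I expect this last matching / off-$\srestZ$ extension to be the main obstacle: $\srestZ$ can be as thin as a one-dimensional manifold, so the reconstruction identity pins the decoder down only along a curve and one cannot differentiate transversally inside $\srestZ$. The whole argument therefore rests on first showing that \emph{both} sides of the identity are genuinely slot-wise — which is where compositionality of $\gtdec$ and additivity of $\moddec$ do their work — after which slot-supportedness performs the extension at no cost; the remaining bookkeeping (the permutation $\pi$, the additive constants) should be routine.
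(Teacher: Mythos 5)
Your skeleton is the paper's: derive $\moddec(\iz)=\gtdec(\z)$ on $\srestZ$ from slot identifiability, use compositionality (together with twice continuous differentiability and convexity of the $\sZk$, cf.\ \Lemref{lem:app_comp_implies_additivity}) to make $\gtdec$ itself additive, then match the two additive representations slot by slot so that slot-supportedness can perform the extension to $\sZ$. The genuine gap sits exactly at the step you describe as arguing ``coordinate by coordinate that each render must act, up to an additive constant, only on the pixels assigned to slot $\idx$.'' Writing $\gtdec=\sum_\idx\objdec_\idx$ and $\moddec=\sum_\idx\modobjdec_\idx$, the ingredients you invoke --- disjoint pixel supports of the $\objdec_\idx$, additivity of $\moddec$, and agreement of the \emph{sums} on $\srestZ$ --- do not imply that each $\modobjdec_\idx\circ\latreck$ is constant on foreign pixels, because non-constant contributions of two different renders to the same pixel can cancel identically on $\srestZ$. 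Concretely, with $K=2$, $\pi=\Id$ and $\srestZ$ the diagonal $\{\zk[1]=\zk[2]\}$, replacing $\modobjdec_1\circ\latreck[1]$ by $\objdec_1+c(\cdot)\,\vectornot{v}$ and $\modobjdec_2\circ\latreck[2]$ by $\objdec_2-c(\cdot)\,\vectornot{v}$ for a non-constant scalar $c$ is consistent with everything your argument uses, yet each render now varies on the other slot's pixels and the sums disagree off the diagonal. So the reduction to ``two slot-wise maps agreeing on a slot-supported set agree everywhere'' --- which would indeed finish the proof at no cost --- is never actually achieved; you correctly flag the thin-$\srestZ$ case as the main obstacle, but the proposed support-theoretic mechanism does not overcome it.

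The paper closes this step analytically rather than set-theoretically: differentiating the identity $\sum_\idx\objdec_\idx(\zk)=\sum_\idx\modobjdec_\idx\big(\latreck(\zk[\pi(\idx)])\big)$ with respect to the single slot $\zk$ annihilates every summand except one on each side, giving
\begin{equation*}
    \der\objdec_\idx(\zk)=\der\big(\modobjdec_{\pi^{-1}(\idx)}\circ\latreck[\pi^{-1}(\idx)]\big)(\zk).
\end{equation*}
Since both sides depend on $\zk$ alone, slot-supportedness extends this to every $\zk\in\sZk$; integrating along segments of the convex $\sZk$ then yields $\objdec_\idx=\modobjdec_{\pi^{-1}(\idx)}\circ\latreck[\pi^{-1}(\idx)]+\constvector_\idx$ for constants $\constvector_\idx$, and evaluating the summed identity back on $\srestZ$ forces $\sum_\idx\constvector_\idx=0$, which gives the claim on all of $\sZ$. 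This slot-wise differentiation of the additive identity --- or some equivalent step that pins down each render individually rather than only their sum --- is the missing ingredient in your proposal; the permutation and the constants are, as you say, routine once it is in place.
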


Consequently, $\moddec$ is now injective on $\isZ$ and we get ${\moddec(\isZ) = \gtdec(\sZ) = \sX}$.

\subsection{Encoder Generalization via Compositional Consistency}\label{sec:encoder_generalization}

Because the decoder $\moddec$ generalizes such that $\moddec(\iz) = \gtdec(\z)$ (\Thmref{theo:decoder_generalization}) and $\moddec(\isZ) = \sX$, the condition on the encoder from \Eqref{eq:autoenc_gen} corresponds to enforcing that $\modenc$ inverts $\moddec$ on all of $\sX$.
This is ensured ID on the training space $\srestX$ by minimizing the reconstruction objective $\recloss$ (\Eqref{eq:lrec}). However, there is nothing enforcing that $\modenc$ also inverts $\moddec$ OOD outside of $\srestX$ (see \Figref{fig:intro}B for a visualization of this problem).
To address this, we propose the following regularizer.

\begin{definition}[Compositional consistency] \label{def:compositional_consistency}
Let $q_\iz$ be a distribution with $\supp(q_\iz) = \isZ$ (\Eqref{eq:z_prime}).
An autoencoder $\aepair$ is said to be \textit{compositionally consistent} if it minimizes the \emph{compositional consistency loss}
\begin{equation}
    \consloss\aepairpx[\isZ] = \E_{\iz \sim q_\iz} \Big[\big\Vert \modenc\big(\moddec(\iz)\big) - \iz \big\Vert_2^2\Big].
\end{equation}
\end{definition}
The loss can be understood as first sampling an OOD combination of slots $\iz$ by composing inferred ID slots $\latreck(\zk[\pi(\idx)])$.
The decoder can then render $\iz$ to create an OOD sample $\moddec(\iz)$.
Re-encoding this sample such that $\modenc(\moddec(\iz)) = \iz$ then regularizes the encoder to invert the decoder OOD.
We discuss how this regularization can be implemented in practice in \Secref{sec:method}.

\subsection{Putting it All Together}
\Thmref{theo:slot_identifiability_restricted} showed how slot identifiability can be achieved ID on $\srestZ$ if $\moddec$ satisfies compositionality, and \Thmref{theo:decoder_generalization}, \Defref{def:compositional_consistency} showed how this identifiability can be generalized to all of $\sZ$ if the decoder is additive and compositional consistency is minimized. Putting these results together, we can now prove conditions for which an autoencoder will generalize compositionally (see \Figref{fig:theory} for an overview).

\begin{theorem}[Compositionally generalizing autoencoder]\label{theo:compositional_generalization}
    Let $\gtdec: \sZ \rightarrow \sX$ be a compositional and irreducible \diffeo.
    Let $\srestZ$ be a convex, slot-supported subset of $\sZ$.
    Let $\aepair$ be an autoencoder with additive decoder $\moddec$ (\Defref{def:additivity}). If  $\moddec$ is compositional on $\modenc(\srestX)$ and $\modenc,\moddec$ solve %
    \begin{equation}\label{eq:objective}
        \recloss\aepairpx[\srestX] + \lambda \consloss\aepairpx[\isZ] = 0,
        \qquad\text{for some }\lambda > 0,
    \end{equation}
    then the autoencoder $\aepair$ generalizes compositionally \wrt $\srestZ$ in the sense of \Defref{def:compositional_generalization}.
\end{theorem}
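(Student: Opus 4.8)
The plan is to obtain \Thmref{theo:compositional_generalization} by composing the three building blocks already established—\Thmref{theo:slot_identifiability_restricted}, \Thmref{theo:decoder_generalization}, and the consistency condition of \Defref{def:compositional_consistency}—which were set up precisely to chain together. First I would split the combined objective \Eqref{eq:objective}: since $\recloss\aepairpx[\srestX]$ and $\consloss\aepairpx[\isZ]$ are both expectations of squared norms, they are non-negative, so from $\recloss\aepairpx[\srestX] + \lambda\, \consloss\aepairpx[\isZ] = 0$ with $\lambda > 0$ each term vanishes separately. In particular $\recloss\aepairpx[\srestX] = 0$, so $\aepair$ minimizes $\recloss(\srestX)$, and $\consloss\aepairpx[\isZ] = 0$.

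Next I would invoke \Thmref{theo:slot_identifiability_restricted}: $\gtdec$ is a compositional and irreducible \diffeo, $\srestZ$ is convex and slot-supported, $\aepair$ minimizes $\recloss(\srestX)$, and $\moddec$ is compositional on $\modenc(\srestX)$ by hypothesis. Hence $\aepair$ slot-identifies $\z$ on $\srestZ$ \wrt $\gtdec$, producing a permutation $\pi$ of $[K]$ and \diffeo{s} $\latreck\colon \zk[\pi(\idx)] \mapsto \modzk$. Because $\srestZ$ is slot-supported, $\srestZk[\pi(\idx)] = \sZk[\pi(\idx)]$, so each $\latreck$ is in fact a \diffeo on the whole slot space $\sZk[\pi(\idx)]$; this makes $\iz$ and $\isZ$ of \Eqref{eq:z_prime} well-defined over all of $\sZ$ and identifies them with the objects appearing in the consistency loss of \Defref{def:compositional_consistency}.

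I would then feed this into \Thmref{theo:decoder_generalization}, whose hypotheses ($\gtdec$ a compositional \diffeo, $\srestZ$ slot-supported, $\aepair$ slot-identifying $\z$ on $\srestZ$, and $\moddec$ additive) are now all met, concluding $\moddec(\iz) = \gtdec(\z)$ for every $\z \in \sZ$, and therefore $\moddec(\isZ) = \gtdec(\sZ) = \sX$. For the encoder, I would use $\consloss\aepairpx[\isZ] = 0$: the integrand $\iz \mapsto \lVert \modenc(\moddec(\iz)) - \iz \rVert_2^2$ is continuous (the encoder and decoder are differentiable, hence continuous) and non-negative, so an integral of zero against $q_\iz$ forces it to vanish on $\supp(q_\iz) = \isZ$, giving $\modenc(\moddec(\iz)) = \iz$ for all $\iz \in \isZ$. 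Composing with the decoder identity yields $\modenc(\gtdec(\z)) = \modenc(\moddec(\iz)) = \iz$ for every $\z \in \sZ$, which is precisely \Eqref{eq:autoenc_gen}.

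Finally I would verify \Defref{def:slot_identifiability} with $\sZ$ playing the role of $\srestZ$ (it is trivially a slot-supported subset of itself). Reconstruction is perfect on $\sX$: any $\x \in \sX$ equals $\gtdec(\z)$ for some $\z \in \sZ$, and $\moddec(\modenc(\x)) = \moddec(\iz) = \gtdec(\z) = \x$, so $\recloss(\sX) = 0$ is minimized. The slot structure transfers unchanged: from $\modenc(\gtdec(\z)) = \iz$, the $\idx$-th inferred slot of $\z \in \sZ$ is exactly $\latreck(\zk[\pi(\idx)])$, so the very same permutation $\pi$ and \diffeo{s} $\latreck$ witness slot identifiability on $\sZ$. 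Hence $\aepair$ slot-identifies $\z$ on $\sZ$ \wrt $\gtdec$, which by \Defref{def:compositional_generalization} means it generalizes compositionally \wrt $\srestZ$. The lemma invocations are routine bookkeeping; the two places I would be careful are (i) converting the vanishing consistency loss into a pointwise identity on $\isZ$ via continuity together with $\supp(q_\iz) = \isZ$, and (ii) tracking that the diffeomorphisms delivered by \Thmref{theo:slot_identifiability_restricted} have domain $\sZk[\pi(\idx)]$ rather than merely $\srestZk[\pi(\idx)]$, since this is exactly what lets $\iz$ be defined on all of $\sZ$ and lets the same $\latreck$ be reused in the final check. Beyond these, there is no genuine analytic obstacle—the structural work already sits in the earlier theorems.
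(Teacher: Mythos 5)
Your proposal is correct and follows essentially the same route as the paper's proof: split the vanishing combined objective into its two non-negative parts, chain Theorem~\ref{theo:slot_identifiability_restricted} into Theorem~\ref{theo:decoder_generalization}, convert the vanishing consistency loss into the pointwise identity $\modenc(\moddec(\iz)) = \iz$ on $\isZ$ (the paper packages this as Lemma~\ref{lem:app_lcons_vanishes}), and pre-compose with $\modenc$ to transfer slot identifiability to all of $\sZ$. The two caveats you flag—pointwise vanishing via continuity and full support, and the $\latreck$ having domain all of $\sZk[\pi(\idx)]$ because $\srestZ$ is slot-supported—are exactly the points the paper's formal definitions are engineered to handle, so nothing is missing.
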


Moreover, $\modenc:\sX\rightarrow\modsZ$ inverts $\moddec: \isZ\rightarrow\sX$ and also 
${\modsZ = \isZ = \latreck[1](\sZk[\pi(1)]) \times \cdots \times \latreck[K](\sZk[\pi(K)])}$.
\section{Compositional Generalization in Practice}\label{sec:method}
In this section, we discuss how our theoretical conditions can be enforced in practice and explain their relationship to current practices in object-centric representation learning.

\paragraph{Compositionality}
\Thmref{theo:compositional_generalization} explicitly assumes that the decoder $\moddec$ satisfies compositionality on $\modenc(\srestX)$ but does not give a recipe to enforce this in practice.
\citet{Brady2023ProvablyLO} proposed a regularizer that enforces compositionality if minimized (see \Appref{app:comp_contrast}), but their objective is computationally infeasible to optimize for larger models, thus limiting its practical use.
At the same time, \citet{Brady2023ProvablyLO} showed that explicitly optimizing this objective may not always be necessary, as the object-centric models used in their experiments seemed to minimize it implicitly, likely through the inductive biases in these models. We observe a similar phenomenon (see \Figref{fig:scatterplots}, right) and thus rely on these inductive biases to satisfy compositionality in our experiments in \Secref{sec:experiments}.

\paragraph{Additivity}
It is trivial to implement an additive decoder by parameterizing the slot functions $\objdec_\idx$ from \Eqref{eq:additivity} as, \eg, deconvolution neural networks. This resembles the decoders typically used in object-centric learning, with the key difference being the use of \emph{slot-wise masks} $\m_\idx$. Specifically, existing models commonly use a decoder of the form
\begin{equation}\label{eq:slot_attention_decoder}
    \moddec(\z) = \sum_{k=1}^K \tilde\m_k \odot \x_k, \qquad \tilde\m_k = \sigma(\m)_k, \qquad (\m_k, \x_k) = \objdec_k(\z_k),
\end{equation}
where $\odot$ is an element-wise multiplication and $\sigma(\cdot)$ denotes the softmax function. Using masks $\m_k$ in this way facilitates modeling occluding objects but violates additivity as the softmax normalizes masks across slots, thus introducing interactions between slots during rendering. We empirically investigate how this interaction affects compositional generalization in \Secref{sec:experiments_slot_attention}.

\paragraph{Compositional Consistency}
\begin{figure}[t]
    \centering
    \includegraphics[width=.95\linewidth]{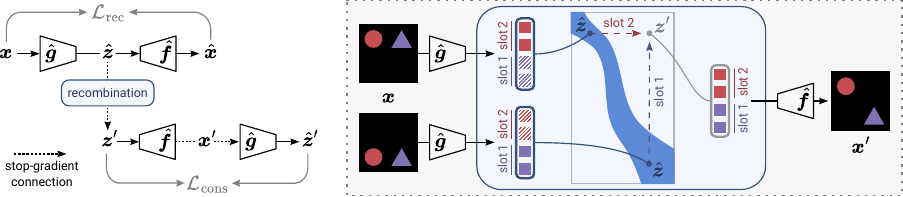}
    \vspace{-1mm}
    \caption{
        \textbf{Compositional consistency regularization}.
        In addition to the reconstruction objective, $\loss_\text{cons}$ is minimized on recombined latents $\iz$.
        Recombining slots of the inferred latents $\modz$ of two ID samples produces a latent $\iz$, which can be rendered to an OOD sample $\ix$ due to the decoder $\moddec$ generalizing OOD. The encoder $\modenc$ is optimized to re-encode this sample to match $\iz$.
    }\label{fig:consistency_loss}
\end{figure}

The main challenge with implementing the proposed compositional consistency loss $\loss_\text{cons}$ (\Defref{def:compositional_consistency}) is sampling $\iz$ from $q_\iz$ with support over $\modsZ$.
First, note that we defined $\isZ$ in \Eqref{eq:z_prime} through use of the functions $\latrec_\idx$, but can equivalently write
\begin{equation}
    \latreck(\zk[\pi(\idx)]) = \modenc_{\idx}\big( \gtdec(\z) \big)
    \quad\text{and}\quad
    \isZ = \modenc_{1}(\srestX) \times \cdots \times \modenc_{K}(\srestX).
\end{equation}
The reformulation highlights that we can construct OOD samples in the consistency regularization from ID observations by randomly shuffling the slots of two inferred ID latents $\samplemodz{1}, \samplemodz{2}$ via ${\rho_k \sim \mathcal U\{1, 2\}}$.
Because $\srestZ$ is a slot-supported subset, constructing $\iz$ as
\begin{equation}
    \iz = \big( \samplemodz{\rho_1}_1, \ldots, \samplemodz{\rho_K}_K\big),
    \quad
    \text{where for}\;
    i \in \{1,2\}\quad
    \samplemodz{i} = \modenc\big(\x^{(i)}\big),\,
    \x^{(i)} \sim p_\x
\end{equation}
ensures that the samples $\iz$ cover the entire $\isZ$.
Practically, we sample $\samplemodz{1}, \samplemodz{2}$ uniformly from the current batch. 
The compositional consistency objective with this sampling strategy is illustrated in \Figref{fig:consistency_loss}. Note that the order of slots is generally not preserved between $\modenc\big( \moddec(\iz) \big)$ and $\iz$ so that we pair slots using the Hungarian algorithm~\citep{kuhnHungarianMethodAssignment1955} before calculating the loss.
Furthermore, enforcing the consistency loss can be challenging in practice if the encoder contains stochastic operations such as the random re-initialization of slots in the Slot Attention module~\citep{locatelloObjectCentricLearningSlot2020} during each forward pass. We explore the impact of this in \Secref{sec:experiments_slot_attention}.
\section{Related work}\label{sec:related}
\paragraph{Theoretical Analyses of Compositional Generalization}
Prior works have addressed identifiability and generalization theoretically in isolation.
For example, several results show how identifiability can be achieved through assumptions on the latent distribution~\citet{hyvarinen2016unsupervised, Hyvrinen2017NonlinearIO, hyvarinen2019nonlinear, khemakhem2020variational, khemakhem2020ice, shu2019weakly, locatello2020weakly, gresele2020incomplete, lachapelle2021disentanglement, klindt2020towards, Halva2021, von2021self, Liang2023CausalCA} or via structural assumptions on the generator function~\citep{gresele2021independent, horan2021when, moran2021identifiable, Buchholz2022FunctionCF,Zheng2022OnTI, Brady2023ProvablyLO}. However, none of these deal with generalization. On the other hand, frameworks for OOD generalization were proposed in the context of object-centric world models~\citep{zhao2022toward} and regression problems~\citep[\eg,][]{netanyahuLearningExtrapolateTransductive2023}, with latent variable formulations that closely resemble our work. In this context, OOD generalization was proven for additive models~\citep{dongFirstStepsUnderstanding2022} or slot-wise functions composed with a known nonlinearity~\citep{wiedemer2023compositional}. Yet, these results are formulated in a regression setting, which assumes the problem of identifiability is solved a priori. Concurrent work from~\citet{lachapelle2023additive} also considers identifiability and generalization. Similar to us, they leverage additivity to achieve decoder generalization, but allow for more general latent supports. They also show that additivity is sufficient for identifiability under additional assumptions on the decoder. However, they only focus on decoder generalization, while we show theoretically and empirically how to enforce that the encoder also generalizes OOD.

\paragraph{Compositional Consistency Regularization} 
Our compositional consistency loss (\Defref{def:compositional_consistency}), which generates and trains on novel data by composing previously learned concepts, resembles ideas in both natural and artificial intelligence. In natural intelligence, \citep{Schwartenbeck2021.06.06.447249, KurthNelson2022ReplayAC, Bakermans2023} propose that the hippocampal formation implements a form of compositional replay in which new knowledge is derived by composing previously learned abstractions. In machine learning, prior works~\citet{rezende2018taming,cemgil2020autoencoding,sinha2021consistency,leeb2021assays} have shown that an encoder can fail to correctly encode samples generated by a decoder, though not in the context of compositional generalization.
For program synthesis, \citet{ellis2023dreamcoder} propose training a recognition model on compositions of learned programs. In object-centric learning, \citet{assouel2022objectcentric} also train an encoder on images from recomposed slots; however, the model is tailored to a specific visual reasoning task.
\section{Experiments}\label{sec:experiments}
This section first verifies our main theoretical result (\Thmref{theo:compositional_generalization}) on synthetic multi-object data (\Secref{sec:experiments_theory}). We then ablate the impact of each of our theoretical assumptions on compositional generalization using the object-centric model Slot Attention~\citep{locatelloObjectCentricLearningSlot2020} (\Secref{sec:experiments_slot_attention}).

\paragraph{Data}
We generate multi-object data using the Spriteworld renderer~\citep{spriteworld19}. Images contain two objects on a black background (\Figref{fig:data_samples}), each specified by four continuous latents (x/y position, size, color) and one discrete latent (shape). To ensure that the generator satisfies compositionality (\Defref{def:compositional}), we exclude images with occluding objects. Irreducibility is almost certainly satisfied due to the high dimensionality of each image, as argued in~\citep{Brady2023ProvablyLO}. We sample latents on a slot-supported subset by restricting support to a diagonal strip in $\sZ$ (see \Appref{app:data}).

\paragraph{Metrics}
To measure a decoder's compositionality (\Defref{def:compositional}), we rely on the compositional contrast regularizer from~\citet{Brady2023ProvablyLO} (\Appref{app:comp_contrast}), which was proven to be zero if a function is compositional. To measure slot identifiability, we follow \citet{locatelloObjectCentricLearningSlot2020, Dittadi2021GeneralizationAR, Brady2023ProvablyLO} and fit nonlinear regressors to predict each ground-truth slot $\z_i$ from an inferred slot $\modz_j$ for every possible pair of slots. The regressor's fit measured by $R^2$ score quantifies how much information $\modz_j$ encodes about $\modz_i$. We subsequently determine the best assignment between slots using the Hungarian algorithm~\citep{kuhnHungarianMethodAssignment1955} and report the $R^2$ averaged over all matched slots.

\subsection{Verifying the Theory}\label{sec:experiments_theory}
\paragraph{Experimental Setup}
We train an autoencoder with an additive decoder on the aforementioned multi-object dataset. The model uses two latent slots with 6 dimensions each. We train the model to minimize the reconstruction loss $\mathcal L_\text{rec}$ (\Eqref{eq:lrec}) for 100 epochs, then introduce the compositional consistency loss $\mathcal L_\text{cons}$ (\Defref{def:compositional_consistency}) and jointly optimize both objectives for an additional 200 epochs. 

\paragraph{Results}
\Figref{fig:scatterplots}~(Right) shows that compositional contrast decreases over the course of training without additional regularization, thus justifying our choice not to optimize it explicitly.
\Figref{fig:scatterplots}~(Left) visualizes slot identifiability of OOD latents as a function of 
$\mathcal L_\text{rec}$ and $\mathcal L_\text{cons}$. OOD slot identifiability is maximized exactly when $\mathcal L_\text{rec}$ and $\mathcal L_\text{cons}$ are minimized, as predicted by \Thmref{theo:compositional_generalization}. This is corroborated by the heatmaps in \Figref{fig:intro}A-C, which illustrate that additivity enables the decoder to generalize as predicted by \Thmref{theo:decoder_generalization} but minimizing $\mathcal L_\text{cons}$ is required for the encoder to also generalize OOD.

\begin{figure}
    \centering
    \includegraphics[width=.95\linewidth]{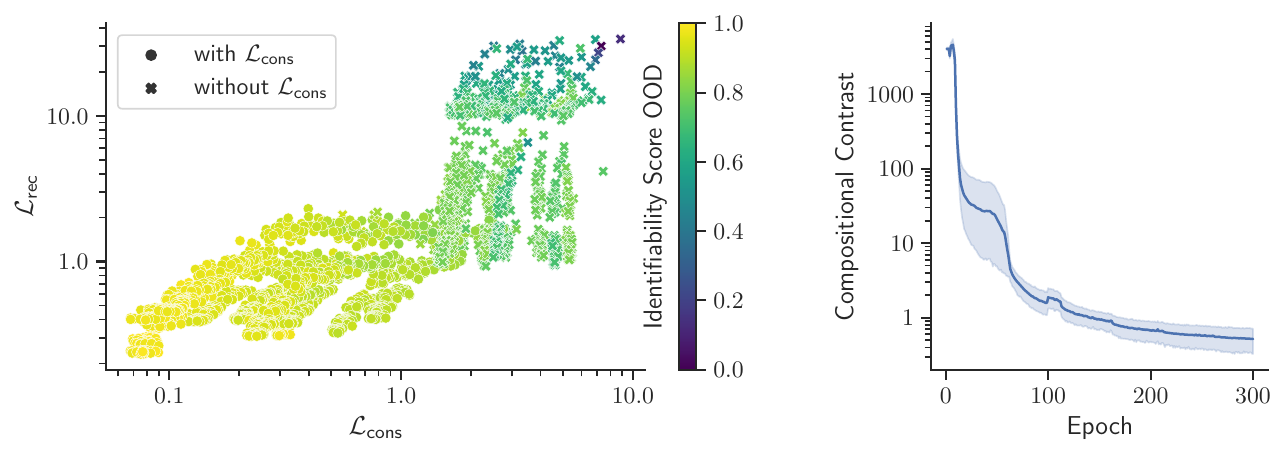}
    \vspace{-4mm}
    \caption{
        \textbf{Experimental validation of \Thmref{theo:compositional_generalization}}.
        \textbf{Left}: Slot identifiability is measured throughout training as a function of reconstruction loss ($\mathcal L_\text{rec}$, \Eqref{eq:lrec}) and compositional consistency ($\mathcal L_\text{cons}$, \Defref{def:compositional_consistency}). As predicted by \Thmref{theo:compositional_generalization}, models which minimize $\mathcal L_\text{rec}$ and $\mathcal L_\text{cons}$ learn representations that are slot identifiable OOD.
        \textbf{Right}: Compositional contrast (see \Appref{app:comp_contrast}) decreases throughout training, indicating that the decoder is implicitly optimized to be compositional (\Defref{def:compositional}).
    }\label{fig:scatterplots}
\end{figure}

\subsection{Ablating Impact of Theoretical Assumptions }\label{sec:experiments_slot_attention}
\paragraph{Experimental Setup}
While \Secref{sec:experiments_theory} showed that our theoretical assumption can empirically enable compositional generalization, these assumptions differ from typical practices in object-centric models. We, therefore, ablate the relevance of each assumption for compositional generalization in the object-centric model Slot Attention. We investigate the effect of additivity by comparing a non-additive decoder that normalizes masks across slots using a softmax with an additive decoder that replaces the softmax with slot-wise sigmoid functions. We also train the model with and without optimizing compositional consistency. Finally, we explore the impact of using a deterministic encoder by replacing Slot Attention's random initialization of slots with a fixed initialization. All models use two slots with 16 dimensions each and are trained on the multi-object dataset from \Secref{sec:experiments_theory}.

\begin{figure}[t]
    \centering
    \includegraphics[width=.9\linewidth]{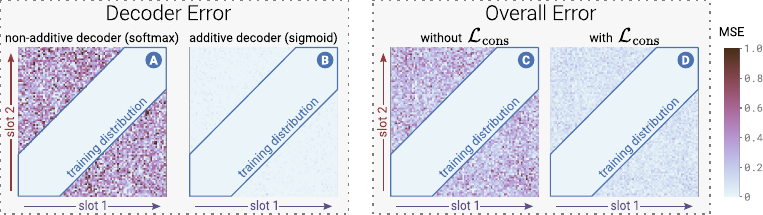}
    \vspace{-3mm}
    \caption{
        \textbf{Compositional generalization for Slot Attention}.
        Visualizing the decoder reconstruction error over a 2D projection of the latent space (see \Appref{app:rec_error} for details) reveals that the non-additive masked decoder in Slot Attention does not generalize OOD on our dataset (\textbf{A}).
        Making the decoder additive by replacing softmax mask normalization with slot-wise sigmoid functions makes the decoder additive and enables OOD generalization (\textbf{B}, \Thmref{theo:decoder_generalization}).
        The full model does not generalize compositionally, however,  since the encoder fails to invert the decoder OOD (\textbf{C}).
        Regularizing with the compositional consistency loss addresses this, enabling generalization (\textbf{D}, \Thmref{theo:compositional_generalization}).
    }\label{fig:heatmaps}
\end{figure}

\begin{table}[t]
    \centering
    \vspace{-3mm}
    \sisetup{
        tight-spacing=true,
        detect-family=true,
        detect-weight=true,
        mode=text,
        output-exponent-marker=\text{e}
    }
    \setlength{\tabcolsep}{0.4em}
    \renewcommand{\bfseries}{\fontseries{b}\selectfont} 
    \newrobustcmd{\B}{\bfseries}

    \caption{
        \textbf{Compositional generalization for Slot Attention} in terms of slot identifiability and reconstruction quality. Both metrics are close to optimal ID but degrade OOD with the standard assumptions in Slot Attention. Incorporating decoder additivity (Add.), compositional consistency ($\mathcal L_\text{cons}$), and deterministic inference (Det.) improves OOD performance.
    }\label{tab:ablations}\vspace{5pt}

    \begin{tabular}{c c c S[table-format=1.2] @{\hspace{1pt}} l S[table-format=1.2] @{\hspace{1pt}} l S[table-format=1.2] @{\hspace{1pt}} l S[table-format=1.2] @{\hspace{1pt}} l}
        \toprule
        \multicolumn{3}{l}{\raisebox{4pt}{\tiny\textit{ }}} & \multicolumn{4}{c}{\B Identifiability $R^2$$\scriptstyle\uparrow$} & \multicolumn{4}{c}{\B Reconstruction $R^2$$\scriptstyle\uparrow$}\\
        \cmidrule(lr){4-7} \cmidrule(lr){8-11}
        \B Add. & $\mathcal L_\text{cons}$ & \B Det. & \multicolumn{2}{c}{\B ID} & \multicolumn{2}{c}{\B OOD} & \multicolumn{2}{c}{\B ID} & \multicolumn{2}{c}{\B OOD}\\ 
        \midrule
          \xmark & \xmark & \xmark & 0.99 & \sd{1.7e-3} &   0.81 & \sd{9.0e-2} & 0.99 & \sd{1.0e-4} &   0.71 & \sd{1.9e-2}\\
          \cmark & \xmark & \xmark & 0.99 & \sd{2.3e-3} &   0.83 & \sd{5.4e-2} & 0.99 & \sd{5.8e-4} &   0.72 & \sd{2.1e-2}\\
          \cmark & \cmark & \xmark & 0.99 & \sd{2.9e-2} &   0.92 & \sd{3.4e-2} & 0.99 & \sd{8.3e-4} &   0.79 & \sd{7.2e-2}\\
          \cmark & \cmark & \cmark & 0.99 & \sd{1.9e-3} & \B0.94 & \sd{2.2e-2} & 0.99 & \sd{1.9e-4} & \B0.92 & \sd{2.1e-2}\\
        \bottomrule
    \end{tabular}
\end{table}

\paragraph{Results}
\Figref{fig:heatmaps} illustrates that the non-additive decoder in Slot Attention does not generalize OOD on our multi-object dataset. Moreover, regularization via the consistency loss is required to make the encoder generalize. \Tabref{tab:ablations} ablates the effect of these assumptions for Slot Attention. We see that satisfying additivity and compositional consistency and making inference deterministic (see \Secref{sec:method}) improves OOD identifiability and reconstruction performance.
\section{Discussion}\label{sec:discussion}
\paragraph{Extensions of Experiments}
Our experiments in \Secref{sec:experiments_slot_attention} provide evidence that compositional generalization will not emerge naturally in object-centric models such as Slot Attention. However, to gain a more principled understanding of the limits of compositional generalization in these models, experiments with a broader set of architectures on more datasets are required. Additionally, $\mathcal L_\text{cons}$, as implemented in \Secref{sec:experiments}, samples novel slot combinations in a naive uniform manner, potentially giving rise to implausible images in more complex settings, \eg, by sampling two background slots. Thus, a more principled sampling scheme should be employed to scale this loss.

\paragraph{Extensions of Theory}
The assumptions of \emph{compositionality} and \emph{additivity}  on the decoder make progress towards a theoretical understanding of compositional generalization, yet are inherently limiting. Namely, they do not allow slots to interact during rendering and thus cannot adequately model general multi-object scenes or latent abstractions outside of objects. Thus, a key direction is to understand how slot interactions can be introduced while maintaining compositional generalization.

\paragraph{Conclusion}
Compositional generalization is crucial for robust machine perception; however, a principled understanding of how it can be realized has been lacking. We show in object-centric learning that compositional generalization is theoretically and empirically possible for autoencoders that possess a structured decoder and regularize the encoder to invert the decoder OOD.
While our results do not provide an immediate recipe for compositional generalization in real-world object-centric learning, they lay the foundation for future theoretical and empirical works.

\section*{Reproducibility Statement}
Detailed versions of all theorems and definitions, as well as the full proofs for all results are included in \Appref{sec:app_def_theorems_proofs}.
We attach our codebase to facilitate the reproduction of our experiments. All hyperparameters, model architectures, training regimes, datasets, and evaluation metrics are provided in the codebase. Explanations for design choices are given in \Secref{sec:experiments} in the main text and \Appref{app:experiments}. The implementation of the compositional consistency loss is detailed in \Secref{sec:method}, paragraph~3.
\section*{Contributions}
TW, JB, and WB initiated the project. JB and TW led the project. AP conducted all experiments with advising from TW and JB. AJ, JB, and TW conceived the conceptual ideas behind the theorems. AJ developed the theoretical results and presented them in \Appref{sec:app_def_theorems_proofs}. TW and JB led the writing of the manuscript with contributions to the theory presentation from AJ and additional contributions from AP, WB, and MB. TW created \Figssref{fig:intro}{fig:theory}{fig:consistency_loss} with insights from all authors. AP and TW created \Figsref{fig:scatterplots}{fig:heatmaps} with insights from JB.

\section*{Acknowledgements}
The authors would like to thank (in alphabetical order): Andrea Dittadi, Egor Krasheninnikov, Evgenii Kortukov, Julius von Kügelgen,
Prasanna Mayilvahannan, Roland Zimmermann, Sébastien Lachapelle, and Thomas Kipf for helpful insights and discussions.

This work was supported by the German Federal Ministry of Education and Research (BMBF): Tübingen AI Center, FKZ: 01IS18039A, 01IS18039B. WB acknowledges financial support via an Emmy Noether Grant funded by the German Research Foundation (DFG) under grant no. BR 6382/1-1 and via the Open Philantropy Foundation funded by the Good Ventures Foundation. WB is a member of the Machine Learning Cluster of Excellence, EXC number 2064/1 – Project number 390727645. This work utilized compute resources at the Tübingen Machine Learning Cloud, DFG FKZ INST 37/1057-1 FUGG. The authors thank the International Max Planck Research School for Intelligent Systems (IMPRS-IS) for supporting TW and AJ.
\clearpage
\bibliographystyle{unsrtnat}
\bibliography{main}

\newpage
\setcounter{section}{0}
\renewcommand\thesection{\Alph{section}}
\appendix
\section{Definitions, Theorems, and Proofs}\label{sec:app_def_theorems_proofs}

In this section, we detail the theoretical contributions of the paper, including all the proofs. Although there is no change in their contents, the formulation of some definitions and theorems are slightly altered here to be more precise and cover edge cases omitted in the main text. Hence, the numbering of the restated elements is reminiscent of that used in the main text.

Throughout the following subsections, let the \emph{number of slots} $K$, the \emph{slot dimensionality} $M$, and the \emph{observed dimensionality} $N$ be arbitrary but fixed positive integers such that $N \geq KM.$

\begin{table}[t]\label{tab:notations}
    \caption{\textbf{General notation and nomenclature}.}
    \vspace{6pt}
    \centering
    \renewcommand{\arraystretch}{1.3}  %
    \begin{tabular}{r @{\quad} p{0.75\textwidth}}
        \toprule
        $K$ & number of slots \\
        $M$ & dimensionality of slots\\
        $N$ & dimensionality of observations \\
        $[n]$ & the set $\{1,2, \ldots, n\}$\\
        $\gtdec: A\partialfuncarrow B$ & function $\gtdec$ is defined on a subset of $A$, \ie $\dom(\gtdec) \subseteq A$\\
        $\gtdec \in \contdiffm[k](A, B)$ & function $\gtdec$ defined on $A$ with values in $B$ is $k$-times continuously differentiable \\
        $\gtdec \in \Diffeoclass[k](A, B)$ & function $\gtdec$ is a {\diffeo[k]} around $A$ with values in $B$ (\Defref{def:app_diffeomorphism})\\
        $\der{\gtdec}(\x)$ & (total) derivative of function $\gtdec$ in point $\x$\\
        $\Lin(V,W)$ & space of all linear operators between linear spaces $V$ and $W$\\
        $\substitutesZ$ (and $\substitutesX$) & a generic subset of $\R^{KM}$ (and $\R^N$), whose role will usually be fulfilled by either $\srestZ$  or $\sZ$ (and $\srestX$ or $\sX$)\\
        $\srestZk$ & projection of $\srestZ$ onto the $\idx$-th slot space $\sZk$ \\
        $\partial_\idx \gtdec_n(\z)$ & $\dfrac{\partial \gtdec_n}{\partial \zk}(\z)$\\
        $I_k^\gtdec(\z)$ & $\big\{ n\in [N] \,\big| \,\partial_k \gtdec_n(\z) \neq 0 \big\}$ \\
        $\gtdec_S(\z)$ & subvector of $\gtdec(\z)$ corresponding to coordinates $S \subseteq [N]$\\
        \bottomrule
    \end{tabular}
\end{table}

\subsection{Introduction and Diffeomorphisms}\label{subsec:app_intro_diffeos}

This subsection aims to provide an accesible and completely formal definition of what we would call throughout \Secref{sec:setup} and \ref{sec:theory} a \emph{\diffeo}.

\newcommand{\diffsubstituesZ}{\substitutesZ}

\begin{definition}[{\Diffeo[k]}]\label{def:app_diffeomorphism}
    Let $\diffsubstituesZ$ be a closed subset of $\R^{KM}$. A function ${\gtdec: \R^{KM} \partialfuncarrow \R^N}$ is said to be a \emph{\diffeo[k] around $\diffsubstituesZ$}, denoted by $\gtdec \in \Diffeoclass[k](\diffsubstituesZ, \R^N)$, if
    \begin{enumerate}[label=\emph{\roman*}\hspace{0.05em})]
        \item $\gtdec$ is defined in an open neighbourhood $\neighbourhoodsZ \subseteq \R^{KM}$ of $\diffsubstituesZ$,
        \item $\gtdec$ is $k$-times continuously differentiable on $\neighbourhoodsZ$, \ie $\gtdec \in \contdiffm[k](\neighbourhoodsZ)$,
        \item \label{enumitem:injective} $\gtdec$ is injective on $\diffsubstituesZ$, \ie bijective between $\diffsubstituesZ$ and $\gtdec(\diffsubstituesZ)$ and
        \item for any $z\in\diffsubstituesZ$ the derivative $\der{\gtdec}(\z) \in \operatorname{Lin}(\R^{KM},\R^N)$ is injective (or equivalently, of full column-rank).
    \end{enumerate}
\end{definition}

\begin{remark}
    In the special case when a function $\latrec$ is a diffeomorphism around $\diffsubstituesZ \subseteq \R^{KM}$, but also maps to $\R^{KM}$, for any $\z \in \diffsubstituesZ$ the derivative $\der\latrec(\z)$ is a bijective, hence invertible linear transformation of $\R^{KM}$. Besides that, because of \ref{enumitem:injective}, $\latrec$ is injective. 
    Based on the inverse function theorem we may conclude that $\latrec$ is injective in an open neighbourhood $\neighbourhoodsZ'$ of $\diffsubstituesZ$ and $\latrec^{-1}|_{\latrec(\neighbourhoodsZ')}$ is also $k$-times continuously differentiable (where, of course, $\latrec(\diffsubstituesZ) \subseteq \latrec(\neighbourhoodsZ')$).
    
    Hence, we arrive to the more familiar definition of a \emph{diffeomorphism}, \ie $\latrec$ being bijective with both $\latrec$ and $\latrec^{-1}$ being continuously differentiable. However, the latter definition cannot be applied in the more general case when the dimensionality of the co-domain is larger than that of the domain, since $\gtdec^{-1}|_{\gtdec(\diffsubstituesZ)}$ cannot be differentiable on a lower dimensional submanifold $\gtdec(\diffsubstituesZ)$.

    With some variations, the mathematical object with the properties listed in \Defref{def:app_diffeomorphism} is often called an \emph{embedding} and serves as a generator or a parametrization of a lower dimensional submanifold of $\R^N$. This is closer to the interpretation we employ here.
\end{remark}

\begin{remark}
    The literature on diffeomorphisms often defines them as smooth, \ie infinitely many times differentiable functions with smooth inverses. In our results and proofs, twice differentiability will suffice.
\end{remark}

\subsection{Definition of generative process, autoencoder}\label{subsec:app_genprocess_inference}

Here we recall the definition of \emph{slot-supported subsets} from \Defref{def:marginal_support} and provide a definition of the latent variable model that encapsulates all of our assumptions on the generative process outlined in \Secref{sec:setup}.

\begin{definition}[Projection onto a slot space]\label{def:app_projection}
    For any ${\srestZ \subseteq \sZ = \sZk[1] \times \cdots \times \sZk[K]}$, let 
    \begin{equation}\label{eq:projection}
        \srestZk \defeq \left\{\zk | \z \in \srestZ\right\}    
    \end{equation}
    be the projection of $\srestZ$ onto the $\idx$-th slot space $\sZk.$
\end{definition}

\begin{repdefinition}{def:marginal_support}[Slot-supported subset]\label{def:app_marginal_support}
    Let $\sZ = \sZk[1] \times \cdots \times \sZk[K]$. A set
    $\srestZ \subseteq \sZ$ is said to be a \emph{slot-supported subset} of $\sZ$ if
    \begin{equation}
        \srestZk \overset{\Defref{def:app_projection}}{=} \left\{\zk | \z \in \srestZ\right\} = \sZk \text{ for any}\; \idx \in [K].
    \end{equation}
\end{repdefinition}

\begin{definition}[\Genprocesslong, \genprocessabbrv]\label{def:app_genprocess}
     A triplet $(\sZ, \srestZ, \gtdec)$ is called a \emph{\genprocesslong (\genprocessabbrv)}, if
    \begin{enumerate}[label=\emph{\roman*}\hspace{0.05em})]
        \item $\sZ = \sZk[1] \times \ldots \times \sZk[K]$ for convex, closed sets $\sZk \subseteq \R^M$,
        \item $\srestZ \subseteq \sZ$ is a closed, slot-supported subset of $\sZ$ (\Defref{def:app_marginal_support}) and
        \item $\gtdec \in \Diffeoclass[1](\sZ, \R^N)$ is a \diffeo around $\sZ$ (in the sense of \Defref{def:app_diffeomorphism}).
    \end{enumerate}
    For a given POGP $(\sZ, \srestZ, \gtdec)$, we refer to $\sZ$ as the \emph{latent space} and $\srestZ$ as the \emph{training latent space}. $\gtdec$ is the \emph{generator}, $\sX \defeq \gtdec(\sZ)$ and $\srestX \defeq\gtdec(\srestZ)$ are the \emph{data space} and \emph{training space} respectively.    
\end{definition}

We also provide a definition for our object-centric model alongside the optimization objective.

\newcommand{\AEsubstitutesX}{\substitutesX}

\begin{definition}[\Aemlong, \aemabbrv]\label{def:app_AE}
    A pair $\aepair$ is called an \emph{\aemlong (\aemabbrv)}, if $\modenc: \R^N \rightarrow \R^{KM}$ and $\moddec:\R^{KM} \rightarrow \R^N$ are continuously differentiable functions, \ie $\modenc, \moddec \in \contdiffm[1]$.
    We refer to the function $\modenc$ as the \emph{encoder} and $\moddec$ as the \emph{decoder}.
    
    In case a \genprocessabbrv $(\sZ, \srestZ, \gtdec)$ is given, we refer to  $\modsZ \defeq \modenc(\sX) = \modenc\big(\gtdec(\sZ)\big)$ as the \emph{inferred latent space} and $\modsX \defeq \moddec(\modsZ)$ as the \emph{reconstructed data space}.

    If for a given set $\AEsubstitutesX \subseteq \R^{N}$ it holds that $\modenc:\AEsubstitutesX \rightarrow \modenc(\AEsubstitutesX)$ is bijective and its inverse is $\moddec:\modenc(\AEsubstitutesX) \rightarrow \AEsubstitutesX$ (also invertible), then we say that the \aemlong $\aepair$ is \emph{consistent  on $\AEsubstitutesX$}.
\end{definition}

\begin{definition}[Reconstruction Loss]
    Let $\aepair$ be an \aemlong. Let $\AEsubstitutesX \subseteq \R^N$ be an arbitrary set and $p_\x$ be an arbitrary continuous distribution function with $\supp(p_\x) = \AEsubstitutesX$.
    The following quantity is called the \emph{reconstruction loss of $\aepair$ with respect to $p_\x$}:
    \begin{equation}
        \recloss\aepairpx \defeq \E_{\x \sim p_\x} \Big[\big\Vert \moddec \big(\modenc(\x)\big) - \x\big\Vert_2^2\Big].
    \end{equation}
\end{definition}

The following simple lemma tells us that for $\recloss\aepairpx$ to vanish, the exact choice of $p_\x$ is irrelevant, and that in this case the decoder (left) inverts the encoder on $\AEsubstitutesX$. In other words, the \aemlong is consistent on $\AEsubstitutesX$.

\begin{lemma}[Vanishing $\recloss$ implies invertibility on $\AEsubstitutesX$]\label{lem:app_lrec_vanishes}
    Let $\aepair$ be an \aemlong and let ${\AEsubstitutesX \subseteq \R^N}$ be an arbitrary set. The following statements are equivalent:
    \begin{enumerate}[i)]
        \item there exists a cont.\ distribution function $p_\x$ with $\supp(p_\x) = \AEsubstitutesX$ such that $\recloss\aepairpx = 0$;
        \item for any cont.\ distribution function $p_\x$ with $\supp(p_\x) = \AEsubstitutesX$ we have $\recloss\aepairpx = 0$;
        \item $\moddec \circ \modenc |_\AEsubstitutesX = id$ or, in other words, $\aepair$ is consistent on $\AEsubstitutesX$.
    \end{enumerate}
\end{lemma}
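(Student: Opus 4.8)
The plan is to prove the cyclic chain of implications $(i)\Rightarrow(iii)\Rightarrow(ii)\Rightarrow(i)$, with essentially all the content concentrated in the first step. Throughout, abbreviate the pointwise reconstruction error as $g(\x) \defeq \big\lVert \moddec\big(\modenc(\x)\big) - \x\big\rVert_2^2$; since $\modenc,\moddec \in \contdiffm[1]$ this $g\colon\R^N\to\R$ is continuous, and it is manifestly non-negative, with $\recloss\aepairpx = \E_{\x\sim p_\x}[g(\x)]$ for every admissible $p_\x$.

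For $(i)\Rightarrow(iii)$ I would argue by contradiction using only the definition of the topological support of a measure. Suppose $\recloss\aepairpx=0$ for some continuous $p_\x$ with $\supp(p_\x)=\AEsubstitutesX$, but $g(\x_0)>0$ for some $\x_0\in\AEsubstitutesX$. Continuity of $g$ yields an open ball $U\subseteq\R^N$ around $\x_0$ on which $g\geq c$ for some $c>0$; and $\x_0\in\AEsubstitutesX=\supp(p_\x)$ forces $p_\x(U)>0$. Since $p_\x$ is carried by $\AEsubstitutesX$, integrating over $U$ gives $\recloss\aepairpx \geq \int_U g\,\mathrm{d}p_\x \geq c\,p_\x(U) > 0$, a contradiction. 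Hence $g\equiv 0$ on $\AEsubstitutesX$, i.e. $\moddec\circ\modenc|_{\AEsubstitutesX}=\Id$. A short additional remark then upgrades this to ``consistent on $\AEsubstitutesX$'' in the sense of \Defref{def:app_AE}: a left inverse makes $\modenc|_{\AEsubstitutesX}$ injective, hence bijective onto $\modenc(\AEsubstitutesX)$, and for any $\modz=\modenc(\x)$ with $\x\in\AEsubstitutesX$ one has $\modenc\big(\moddec(\modz)\big)=\modenc\big(\moddec(\modenc(\x))\big)=\modenc(\x)=\modz$, so $\moddec|_{\modenc(\AEsubstitutesX)}$ is a genuine two-sided inverse.

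For $(iii)\Rightarrow(ii)$: if $\moddec\circ\modenc|_{\AEsubstitutesX}=\Id$ then $g\equiv 0$ on $\AEsubstitutesX$, and any continuous $p_\x$ with $\supp(p_\x)=\AEsubstitutesX$ is concentrated on $\AEsubstitutesX$, so $\recloss\aepairpx=\E_{\x\sim p_\x}[g(\x)]=0$. For $(ii)\Rightarrow(i)$ there is nothing to do beyond observing that at least one continuous distribution with support $\AEsubstitutesX$ exists --- which is part of the standing setup and is in any case true in all applications, where $\AEsubstitutesX$ is the image of a convex body (or of a slot-supported subset) under a diffeomorphism.

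I do not expect a genuine obstacle here: the only points needing care are (a) running the vanishing-integral argument through the topological support of $p_\x$ rather than assuming it has a density, and (b) reconciling the bare identity $\moddec\circ\modenc|_{\AEsubstitutesX}=\Id$ with the two-sided ``bijective-with-inverse'' wording of consistency in \Defref{def:app_AE}; both are dispatched by the brief arguments above.
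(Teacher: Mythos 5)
Your proposal is correct and follows essentially the same route as the paper's proof: a cyclic chain of implications in which the only substantive step is showing that a vanishing expectation of the continuous, non-negative pointwise error forces it to vanish identically on the support. You merely spell out two details the paper leaves implicit (the open-ball argument through the topological support, and the upgrade from a left inverse to the two-sided ``consistency'' of \Defref{def:app_AE}), which is fine but not a different approach.
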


\paragraph{Notation} Since $\recloss\aepairpx$ vanishes either for all $p_\x$ or none at the same time, henceforth in the context of vanishing reconstruction loss, we are going to denote the \emph{reconstruction loss of $\aepair$ \wrt $p_\x$} as $\recloss\aepairpx[\AEsubstitutesX]$ or just $\recloss(\AEsubstitutesX)$.

\begin{proof}[Proof of \Lemref{lem:app_lrec_vanishes}]
    We prove the equivalence of these statements by proving $ii) \Rightarrow i) \Rightarrow iii) \Rightarrow ii)$.

    The implication $ii) \Rightarrow i)$ is trivial.

    For the implication $i) \Rightarrow iii)$, let us suppose that there exists a $p_\x$ with $\supp(p_\x) = \AEsubstitutesX$ such that $\recloss\aepairpx = 0$. Equivalently:
    \begin{equation}
        \E_{\x \sim p_\x} \Big[\big\Vert \moddec \big(\modenc(\x)\big) - \x\big\Vert_2^2\Big] = 0.
    \end{equation}
    This shows that $\big\Vert \moddec \big(\modenc(\x)\big) - \x\big\Vert_2^2 = 0$ or $\moddec \big(\modenc(\x)\big) = \x$ holds almost surely \wrt $\x \sim p_\x$. Since both $\modenc,\moddec$ are continuous functions, this implies that $\moddec \circ \modenc |_\AEsubstitutesX = id$, which is exactly what was to be proven.

    Now, for $iii) \Rightarrow ii)$, let us suppose that $\moddec \circ \modenc |_\AEsubstitutesX = id$ and let $p_\x$ be any continuous distribution function with $\supp(p_\x) = \AEsubstitutesX$. Since, $\moddec \big(\modenc(\x)\big) = \x$ holds for any non-random $\x \in \AEsubstitutesX$, then with probability $1$, $\moddec \big(\modenc(\x)\big) = \x$ also holds. From this, we conclude that:
    \begin{equation}
        \recloss\aepair = \E_{\x \sim p_\x} \Big[\big\Vert \moddec \big(\modenc(\x)\big) - \x\big\Vert_2^2\Big] = 0.\qedhere
    \end{equation}
\end{proof}

\subsection{Definition of Slot Identifiability and Compositional Generalization}\label{subsec:app_slotiden_compgen}

We are now ready to recall the definition of \emph{slot identifiability} (\Defref{def:slot_identifiability}, originally in \cite{Brady2023ProvablyLO}) in its most abstract form, followed by the definition of \emph{compositional generalization} (\Defref{def:compositional_generalization}).

\newcommand{\slotidentsubstituesZ}{\substitutesZ}
\newcommand{\slotidentsubstituesX}{\substitutesX}

\begin{repdefinition}{def:slot_identifiability}[Slot identifiability]\label{def:app_slot_identifiability}
    Let $\gtdec \in \Diffeoclass[1](\sZ, \R^N)$, where $\sZ = \sZk[1] \times \ldots \times \sZk[K]$ for closed sets $\sZk \subseteq \R^M$, and let $\slotidentsubstituesZ \subseteq \sZ$ be a closed, slot-supported subset of $\sZ$ (\eg $\sZ$ or $\srestZ$ from a \genprocessabbrv).
    
    An \aemlong $\aepair$ is said to \emph{slot-identify $\z$ on $\slotidentsubstituesZ$ \wrt $\gtdec$} via $\modlatrec(\z) \defeq \modenc\big(\gtdec(\z)\big)$ if
    it minimizes 
    $\recloss(\slotidentsubstituesX)$ for $\slotidentsubstituesX=\gtdec(\slotidentsubstituesZ)$
    and there exists a permutation $\pi$ of $[K]$ and a set of \diffeo{s} 
    $\latreck \in \Diffeoclass[1]\big(\sZ_{\pi(\idx)}, \modlatreck[\idx](\slotidentsubstituesZ)\big)$
    such that 
    $\modlatreck[\idx](\z) = \latreck(\zk[\pi(\idx)])$
        for any $\idx$ and $\z$, where
        $\modlatreck[\idx](\slotidentsubstituesZ)$ is the projection of the set $\modlatrec(\slotidentsubstituesZ)$ (\cf \Defref{def:app_projection}).
\end{repdefinition}

\begin{repdefinition}{def:compositional_generalization}[Compositional generalization]\label{def:app_compositional_generalization}

Let $(\sZ, \srestZ, \gtdec)$ be a \genprocessabbrv. An \aemlong $\aepair$ that slot-identifies $\z$ on $\srestZ$ \wrt $\gtdec$ is said to \emph{generalize compositionally \wrt $\srestZ$}, if it also slot-identifies $\z$ on $\sZ$ \wrt $\gtdec$.
\end{repdefinition}

\subsection{Compositionality and irreducibility assumptions}\label{subsec:app_comp_irr}

In this subsection we present the formal definitions of \emph{compositionality} and \emph{irreducibility}, already mentioned in \Secref{sec:theory} and originally introduced in \cite{Brady2023ProvablyLO}. These two properties represent sufficient conditions for \genprocessabbrv{s} (\Defref{def:app_genprocess}) to be slot-identifiable on the training latent space (\Defref{def:app_slot_identifiability}). 

\newcommand{\compsubstitutesZ}{\substitutesZ}

Let $\gtdec\in\contdiffm(\sZ, \R^N)$, $\sZ\subseteq\R^{KM}$, $\idx\in [K]$, $\z\in\sZ$. For the sake of brevity let us denote $\partial_\idx \gtdec_n(\z) = {\partial \gtdec_n}/{\partial \z_\idx}(\z)$. In this case, let us define
\begin{equation}
     I_\idx^\gtdec(\z) = \big\{ n\in [N] \,\big| \,\partial_\idx \gtdec_n(\z) \neq 0 \big\}
\end{equation}
the set of coordinates locally influenced (\ie, in point $\z$) by slot $\z_\idx$ \wrt the generator $\gtdec$.

\begin{repdefinition}{def:compositional}[Compositionality]\label{def:app_compositionality}
    We say that a function $\gtdec\in\contdiffm(\sZ, \R^N), \sZ \subseteq \R^{KM}$, is \emph{compositional on $\compsubstitutesZ \subseteq \sZ$} if for any $\z\in\compsubstitutesZ$ and $k\neq j, k,j \in [N]$:
    \begin{equation}
        I_k^\gtdec(\z) \cap I_j^\gtdec(\z) = \varnothing.
    \end{equation}
\end{repdefinition}

Let $\gtdec_S(\z)$ denote the subvector of $\gtdec(\z)$ corresponding to coordinates $S \subseteq [N]$ and let ${\der{\gtdec}_S(z) \in \Lin(\R^{KM}, \R^{|S|})}$ be the corresponding derivative.

\begin{definition}[Irreducibility]\label{def:app_irreducibility}
    We say that a function $\gtdec\in\contdiffm(\sZ, \R^N), \sZ \subseteq \R^{KM}$ is \emph{irreducible on $\compsubstitutesZ \subseteq \sZ$} if for any $\z\in\compsubstitutesZ$ and $\idx \in [N]$ and any non-trivial partition $I_\idx^\gtdec(\z) = S_1 \cup S_2$ (\ie, $S_1\cap S_2 = \varnothing$ and $S_1,S_2 \neq \varnothing$) we have:
    \begin{equation}
        \rank\big(\der{\gtdec}_{S_1}(\z)\big) + \rank\big(\der{\gtdec}_{S_2}(\z)\big) > \rank\big(\der{\gtdec}_{I_\idx^\gtdec(\z)}(\z)\big).
    \end{equation}
\end{definition}

\begin{remark}
    Given linear operators $A\in \Lin(U, V), B\in \Lin(U,W)$, the following upper bound holds for the rank of $(A,B) \in \Lin(U, V\times W)$:
    \begin{equation}
        \rank(A) + \rank(B) \geq \rank\big((A,B)\big).
    \end{equation}
    Therefore, it holds in general that:
    \begin{equation}
        \rank\big(\der{\gtdec}_{S_1}(z)\big) + \rank\big(\der{\gtdec}_{S_2}(z)\big) \geq \rank\big(\der{\gtdec}_{I_i^\gtdec(z)}(z)\big),
    \end{equation}
    hence, irreducibility only prohibits equality.
\end{remark}

\subsection{Identifiability on the training latent space}\label{subsec:appendix_ident_restricted}

Here we recall \Thmref{theo:slot_identifiability_restricted}, originally presented in a slightly less general form in \cite{Brady2023ProvablyLO}.

\begin{reptheorem}{theo:slot_identifiability_restricted}[Slot identifiability on slot-supported subset]
    \label{theo:app_identifiability_restricted}
    Let $(\sZ, \srestZ, \gtdec)$ be a POGP (\Defref{def:app_genprocess}) such that
    \begin{enumerate}[i)]
        \item $\srestZ$ is convex and
        \item $\gtdec$ is compositional and irreducible on $\srestZ$.
    \end{enumerate}
    Let $\aepair$ be an \aemlong (\Defref{def:app_AE}) such that
    \begin{enumerate}[resume*]
        \item \label{enumitem:aepair_consistent} $\aepair$ minimizes $\recloss(\srestX)$ for $\srestX = \gtdec(\srestZ)$ and
        \item $\moddec$ is compositional on $\modsrestZ \defeq \modenc(\srestX)$.
    \end{enumerate}
    Then $\aepair$ slot-identifies $\z$ on $\srestZ$ \wrt $\gtdec$ in the sense of \Defref{def:app_slot_identifiability}.
\end{reptheorem}

\begin{remark}
    Due to \Lemref{lem:app_lrec_vanishes}, assumption \emph{\ref{enumitem:aepair_consistent}} is equivalent to $\aepair$ being consistent on $\srestX$, \ie, $\modenc|_\srestX$ is injective and its inverse is $\moddec|_{\modsrestZ}$.
\end{remark}

In its original framework, \cite{Brady2023ProvablyLO} assumed the training latent space to be the entirety of $\R^{KM}$. In our case, however, $\srestZ$ is a closed, convex, slot-supported subset of $\sZ$. Therefore, it is required to reprove \Thmref{theo:app_identifiability_restricted}.

\subsection{Proof of slot identifiability on slot-supported subset} \label{subsec:app_proof_ident_restricted}

In this subsection we reprove \Thmref{theo:app_identifiability_restricted} via $3$ steps. First, we prove that the latent reconstruction function $\modlatrec = \modenc \circ \gtdec$ is, under the consistency of the \aemlong, a diffeomorphism. Secondly, we restate Prop.~3 of \cite{Brady2023ProvablyLO} using our notation. It is a result that locally describes the behaviour of $\modlatrec$, irrespective of the shape of the latent training space. Hence, no proof is required. The third and final step is concluding \Thmref{theo:app_identifiability_restricted} itself.

\newcommand{\hdiffeosubstitutesZ}{\substitutesZ}
\newcommand{\hdiffeosubstitutesX}{\substitutesX}

\begin{lemma}[Latent reconstruction is \diffeo]
    \label{lem:app_h_diffeomorphic}
    Let $\gtdec \in \Diffeoclass[1](\hdiffeosubstitutesZ, \R^N)$ for $\hdiffeosubstitutesZ \subseteq \R^{KM}$ closed and $\aepair$ an \aemlong consistent on $\hdiffeosubstitutesX \defeq \gtdec(\hdiffeosubstitutesZ)$.
    Then $\modlatrec \defeq \modenc \circ \gtdec$ is a \diffeo[1] around $\hdiffeosubstitutesZ$ (\Defref{def:app_diffeomorphism}).
    
    In particular, for any $\z \in \hdiffeosubstitutesZ$, $\der\modlatrec(\z)$ is invertible linear transformation of $\R^{KM}$, continuously depending on $\z$.
\end{lemma}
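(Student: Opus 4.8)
The plan is to check, one at a time, the four defining conditions for $\modlatrec=\modenc\circ\gtdec$ to be a \diffeo[1] around $\substitutesZ$ (\Defref{def:app_diffeomorphism}); all but the condition on the derivative are immediate. Since $\gtdec\in\Diffeoclass[1](\substitutesZ,\R^N)$, it is defined and $\contdiffm[1]$ on an open neighbourhood $\neighbourhoodsZ$ of $\substitutesZ$, and $\modenc$ is $\contdiffm[1]$ on all of $\R^N$ by \Defref{def:app_AE}; hence $\modlatrec$ is defined and $\contdiffm[1]$ on $\neighbourhoodsZ$, and by the chain rule $\z\mapsto\der\modlatrec(\z)=\der\modenc\big(\gtdec(\z)\big)\circ\der\gtdec(\z)$ is continuous, which is the continuity assertion of the lemma. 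For injectivity on $\substitutesZ$: $\gtdec$ is injective on $\substitutesZ$ by the definition of a diffeomorphism around $\substitutesZ$, and consistency of $\aepair$ on $\substitutesX=\gtdec(\substitutesZ)$ forces $\modenc|_{\substitutesX}$ to be a bijection onto its image, hence injective; so $\modlatrec|_{\substitutesZ}$ is a composition of injective maps.

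The only substantive point is injectivity of $\der\modlatrec(\z_0)$ for each $\z_0\in\substitutesZ$ which, the target of $\modlatrec$ being $\R^{KM}$, is equivalent to invertibility. Fix $\z_0\in\substitutesZ$ and write $\x_0=\gtdec(\z_0)$. By \Lemref{lem:app_lrec_vanishes}, consistency on $\substitutesX$ is the statement $\moddec\circ\modenc|_{\substitutesX}=\mathrm{id}$; precomposing with $\gtdec$ gives $\moddec\circ\modlatrec=\gtdec$ on $\substitutesZ$. I would then differentiate this identity at $\z_0$ along $\contdiffm[1]$-curves that lie in $\substitutesZ$ and pass through $\z_0$: for such a curve with velocity $v$ at $\z_0$, the chain rule yields $\der\moddec\big(\modenc(\x_0)\big)\,\der\modlatrec(\z_0)\,v=\der\gtdec(\z_0)\,v$. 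Once the attainable velocities $v$ are known to span $\R^{KM}$, this upgrades to the linear-operator identity $\der\moddec\big(\modenc(\x_0)\big)\circ\der\modlatrec(\z_0)=\der\gtdec(\z_0)$; since $\der\gtdec(\z_0)$ is injective (full column rank) by \Defref{def:app_diffeomorphism}, $\der\modlatrec(\z_0)$ is injective too, hence invertible. The inverse function theorem, as recalled in the remark following \Defref{def:app_diffeomorphism}, then promotes $\modlatrec$ to a genuine local diffeomorphism around $\substitutesZ$.

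The main obstacle is exactly this differentiation step: the relation $\moddec\circ\modlatrec=\gtdec$ is known only on $\substitutesZ$, so a priori one may differentiate only along directions tangent to $\substitutesZ$ at $\z_0$. The argument thus hinges on (a) showing that these tangent directions span $\R^{KM}$ and (b) treating boundary points of $\substitutesZ$ via one-sided derivatives together with a continuity/closure argument. In the situation of \Thmref{theo:app_identifiability_restricted}, where $\substitutesZ$ is convex and slot-supported, (a) holds whenever $\substitutesZ$ has nonempty interior in $\R^{KM}$, since the tangent cone of such a set spans the ambient space at each of its points; carrying out (a)--(b) rigorously is where the real work lies, everything else being routine bookkeeping with compositions of $\contdiffm[1]$ maps.
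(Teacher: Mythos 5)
Your route is the same as the paper's: verify the four conditions of \Defref{def:app_diffeomorphism} one by one, with smoothness and injectivity of $\modlatrec|_{\substitutesZ}$ following immediately from the chain rule and from composing the injective maps $\gtdec|_{\substitutesZ}$ and $\modenc|_{\gtdec(\substitutesZ)}$, and with injectivity of $\der\modlatrec(\z)$ obtained by differentiating the consistency identity $\moddec\circ\modlatrec=\gtdec$ on $\substitutesZ$ and using that $\der\gtdec(\z)$ has full column rank. The one place you diverge is that you stop to worry about whether that identity \emph{may} be differentiated: it is only known to hold on $\substitutesZ$, so a priori the derivative identity $\der\moddec\big(\modlatrec(\z)\big)\,\der\modlatrec(\z)=\der\gtdec(\z)$ is only forced along directions in the tangent cone of $\substitutesZ$ at $\z$. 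The paper's own proof performs this differentiation without comment, so on this point you have not fallen short of the paper --- you have merely made explicit a step it silently assumes.

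That said, as a self-contained proof your proposal is incomplete by your own admission, and the concern you raise is genuine for the lemma \emph{as stated}: the hypotheses only require $\substitutesZ$ to be closed, and for a thin $\substitutesZ$ (e.g.\ a single point, where consistency reduces to $\moddec(\modenc(\x_0))=\x_0$ and is satisfied by a constant encoder) the conclusion that $\der\modlatrec(\z)$ is invertible is simply false. So the lemma implicitly needs the tangent cone of $\substitutesZ$ to span $\R^{KM}$ at every point, which neither closedness nor convexity-plus-slot-supportedness guarantees (the diagonal of $[0,1]^{2}$ is convex, closed, and slot-supported, yet one-dimensional). If you adopt the paper's convention and grant the differentiation step, the remainder of your argument is correct and essentially identical to the paper's; if you want to close the gap you flagged, you would need to add a thickness hypothesis on $\substitutesZ$ (nonempty interior, or a spanning tangent cone at each point, handling boundary points by one-sided derivatives and continuity of $\der\modlatrec$ as you sketch) and note that this hypothesis must then be propagated to \Thmref{theo:app_identifiability_restricted}, whose later steps likewise manipulate the full partial derivatives $\partial_j\modlatreck[\idx]$ on $\srestZ$.
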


\begin{proof}
    Since $\gtdec$ is \contdiff in an open neighbourhood of $\hdiffeosubstitutesZ$ and $\modenc$ is \contdiff on $\R^N$, it follows that $\modlatrec=\modenc \circ \gtdec$ is also \contdiff in an open neighbourhood of $\hdiffeosubstitutesZ$.

    The \aemlong $\aepair$ is consistent on $\hdiffeosubstitutesX$, hence $\modenc|_\hdiffeosubstitutesX$ is injective and $\modlatrec: \hdiffeosubstitutesZ \rightarrow \modenc(\hdiffeosubstitutesX)$ is bijective.

    Moreover, the inverse of $\modenc|_\hdiffeosubstitutesX$ is $\moddec$ restricted to $\modenc(\hdiffeosubstitutesX)$. Therefore, $\modlatrec=\modenc \circ \gtdec$ implies that for any $\z\in\hdiffeosubstitutesZ$ it holds that $\moddec\big(\modlatrec(\z)\big)=\gtdec(\z).$ After differentiation we receive that for any $\z\in\hdiffeosubstitutesZ$:
    \begin{equation}
        \der \moddec \big(\modlatrec(\z)\big) \, \der \modlatrec(\z) = \der \gtdec(\z).
    \end{equation}
    However, $\gtdec$ is a \diffeo[1], consequently $\der{\gtdec}(\z) \in \Lin(\R^{KM}, \R^{N})$ is injective. On the left-hand side, we then have an injective composition of linear functions. Therefore, $\der{\modlatrec}(\z)\in \operatorname{Lin}(\R^{KM},\R^{KM})$ is injective and, of course, bijective. 

    Consequently, $\modlatrec$ is a \diffeo[1].
\end{proof}

\begin{lemma}[Prop.~3 of \citet{Brady2023ProvablyLO}]
    \label{lem:app_prop_3_objects}
    Let $(\sZ, \srestZ, \gtdec)$ \genprocessabbrv and $\aepair$ \aemlong satisfy the assumptions of \Thmref{theo:app_identifiability_restricted}
    and let $\modlatrec\defeq \modenc \circ \gtdec$ (now a \diffeo[1] around $\srestZ$ because of remark after \Thmref{theo:app_identifiability_restricted} and \Lemref{lem:app_h_diffeomorphic}).
    
    Then for any $\z\in\srestZ$ and any $j\in [K]$ there exists a unique $\idx\in [K]$ such that $\partial_j \modlatreck[\idx](\z) \neq 0$. For this particular $\idx$ it holds that $\partial_j \modlatreck[\idx](\z)\in \Lin(\R^M,\R^M)$ is invertible.
\end{lemma}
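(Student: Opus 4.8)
The plan is to reduce the claim to a linear-algebraic statement about the Jacobian of $\modlatrec\defeq\modenc\circ\gtdec$ at a fixed $\z\in\srestZ$, and then adapt the argument behind Prop.~3 of \citet{Brady2023ProvablyLO}. By \Lemref{lem:app_h_diffeomorphic}, $\modlatrec$ is a \diffeo[1] around $\srestZ$, so $J\defeq\der\modlatrec(\z)$ is an invertible $KM\times KM$ matrix whose $M\times M$ blocks are precisely $J_{\idx j}=\partial_j\modlatreck[\idx](\z)$. Since $\aepair$ minimizes $\recloss(\srestX)$, \Lemref{lem:app_lrec_vanishes} gives $\moddec\circ\modenc|_{\srestX}=\Id$, hence $\moddec\big(\modlatrec(\z)\big)=\gtdec(\z)$ on $\srestZ$; differentiating yields $DJ=G$ with $D\defeq\der\moddec\big(\modlatrec(\z)\big)$ and $G\defeq\der\gtdec(\z)$. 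By compositionality of $\gtdec$ on $\srestZ$ and of $\moddec$ on $\modsrestZ\ni\modlatrec(\z)$, every row of $G$ (resp.\ $D$) has nonzero entries in at most one column-block; write $I_j^\gtdec(\z)$ for the rows of $G$ supported in block $j$, and $\tilde I_\idx$ for the rows of $D$ supported in block $\idx$. Because $\gtdec$ is a \diffeo, $G$ is injective; since $J$ is invertible, $D=GJ^{-1}$ is injective as well, and $G$ and $D$ have the same zero rows (call the complement the set of \emph{active rows}).

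Next I would read off the block structure of $J$. Restricting $DJ=G$ to the rows in $\tilde I_\idx$, where $D$ lives only in column-block $\idx$, gives $A_\idx J_{\idx j}=\der{\gtdec}_{I_j^\gtdec(\z)\cap\tilde I_\idx}(\z)$ embedded as a submatrix on those rows, where $A_\idx$ is the block-$\idx$ part of $D$ on the rows $\tilde I_\idx$. Injectivity of $D$ together with disjointness of the $\tilde I_\idx$ forces each $A_\idx$ to have full column rank, so $\rank(J_{\idx j})=\rank\big(\der{\gtdec}_{I_j^\gtdec(\z)\cap\tilde I_\idx}(\z)\big)$; moreover every row of $\der\gtdec$ indexed by $I_j^\gtdec(\z)$ is nonzero by definition of that set, hence $J_{\idx j}\ne 0$ exactly when $I_j^\gtdec(\z)\cap\tilde I_\idx\ne\varnothing$. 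The same computation with $J^{-1}$ and $\moddec$ replacing $J$ and $\gtdec$ shows the nonzero-block pattern of $J^{-1}$ is the transpose of that of $J$.

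Now I would analyze the bipartite graph on ground-truth slots and inferred slots with an edge $j\sim\idx$ iff $I_j^\gtdec(\z)\cap\tilde I_\idx\ne\varnothing$. Injectivity of $G$ and $D$ makes all $I_j^\gtdec(\z)$ and all $\tilde I_\idx$ nonempty, and since the $\tilde I_\idx$ partition the active rows, every vertex has degree $\ge 1$. Because $J_{\idx j}=0$ whenever $j\not\sim\idx$, a reordering makes $J$ block-diagonal along the connected components of this graph; since $J$ is square and invertible, each component's diagonal block must be square and invertible, so every component has equally many ground-truth and inferred slots. If some component were not a single edge, it would contain a ground-truth slot $j$ whose set $I_j^\gtdec(\z)$ is partitioned nontrivially by the $\tilde I_\idx$'s; feeding this partition into the irreducibility of $\gtdec$ (\Defref{def:app_irreducibility}), together with the rank identity above and the invertibility of that component's diagonal block, yields a contradiction. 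Hence every component is a single edge, i.e.\ there is a permutation $\pi$ with $j\sim\idx\iff\idx=\pi(j)$. Then for fixed $j$, $J_{\idx j}=0$ unless $\idx=\pi(j)$, which gives uniqueness; and since both $j$ and $\idx=\pi(j)$ have degree one, $\tilde I_\idx=I_j^\gtdec(\z)$, so the rank identity yields $\rank(J_{\idx j})=\rank\big(\der{\gtdec}_{I_j^\gtdec(\z)}(\z)\big)=M$ (these rows carry column-block $j$ of the injective $G$), i.e.\ $\partial_j\modlatreck[\idx](\z)$ is invertible.

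The main obstacle is the contradiction inside the graph argument. Irreducibility of $\gtdec$ by itself only gives that a nontrivial splitting of $I_j^\gtdec(\z)$ forces $\sum_\idx\rank(J_{\idx j})$ to strictly exceed $M$; to turn this into an actual contradiction one must additionally exploit compositionality of $\moddec$ and invertibility of $J$ (equivalently of $J^{-1}$), which couple the two row-partitions $\{I_j^\gtdec(\z)\}$ and $\{\tilde I_\idx\}$ tightly enough that, within each connected component on $r$ slots, $\sum_{j,\idx}\rank(J_{\idx j})$ cannot exceed $rM$. This bookkeeping is exactly what is carried out in the proof of Prop.~3 of \citet{Brady2023ProvablyLO}, and it transfers verbatim here since the statement is purely local in $\z$ and does not see the shape of $\srestZ$.
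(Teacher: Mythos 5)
The paper gives no proof of this lemma: it is imported verbatim from Prop.~3 of \citet{Brady2023ProvablyLO}, with the single observation that the statement is pointwise in $\z$ and therefore unaffected by restricting the latent support to $\srestZ$ --- which is exactly the justification you arrive at in your final paragraph. Your Jacobian/bipartite-graph sketch is a faithful reconstruction of the cited argument and the steps you do carry out are sound, but since you also defer the decisive irreducibility rank-counting to the same source for the same locality reason, your proposal is in substance the same appeal to citation plus locality that the paper makes.
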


\begin{remark}
    Since $\modlatrec$ is a \diffeo, $\der\modlatrec(\z)$ is invertible. Thus, the statement of \Lemref{lem:app_prop_3_objects} is equivalent to saying that for any $\z\in\srestZ$ and any $\idx\in [K]$ there exists a unique $j\in [K]$ such that $\partial_j \modlatreck[\idx](\z) \neq 0$.
\end{remark}

\begin{proof}[Proof of \Thmref{theo:app_identifiability_restricted}]
    \label{proof:app_identifiability_restricted}
    
    Let $\modlatrec\defeq \modenc \circ \gtdec$. Based on \Lemref{lem:app_prop_3_objects} and the latest remark, for any $\z$ and any $\idx$ there exists a unique $j$ such that $\partial_j \modlatreck[\idx](\z) \neq 0,$ and for this $j$, $\partial_j \modlatreck[\idx](\z) \in \Lin(\R^{KM}, \R^{KM})$ is invertible.

    \paragraph{Step 1.} Firstly, we claim that, in this case, the mapping $\idx \mapsto j$ is bijective and independent of $\z$.
    More precisely, we show that there exists a permutation $\pi$ of $[K]$ such that 
    \begin{equation}\label{eq:proof_restricted_step1}
        \text{for any $\z, \idx$ and $j$:}\quad \partial_j \modlatreck[\idx](\z) \neq 0 \iff j = \pi(\idx)
    \end{equation}
    and, in the latter case, $\partial_{\pi(\idx)}\modlatreck[\idx](\z) \text{ is invertible}$.

    To prove this, we conclude from the invertibility of $\der \modlatrec(\z)$ that for any $\z$ there exists such a $\pi$. Now suppose that there exist two distinct points $\samplez{1}, \samplez{2} \in \srestZ$ and indices $\idx$ and $j_1 \neq j_2$ from $[K]$ such that $\partial_{j_1} \modlatreck[\idx](\samplez{1}) \neq 0, \partial_{j_2} \modlatreck[\idx](\samplez{2}) \neq 0$. Then, since $\srestZ$ is path-connected (as being convex), it provides us with a continuous function $\pathfunction:[0,1]\rightarrow \srestZ$ such that $\pathfunction(0)=\samplez{1}$ and $\pathfunction(1) = \samplez{2}$. Let
    \begin{equation}
        t^* = \sup \{ t\in [0,1] \,|\, \pointpartial{j_1}{t} \neq 0 \}. 
    \end{equation}
    On one hand, $\pointpartial{j_2}{1} = \partial_{j_2} \modlatreck[\idx](\samplez{2}) \neq 0$, hence $\pointpartial{j_1}{1}=0$ and for any $t>t^*$: $\pointpartial{j_1}{t}=0$. Therefore, because $\partial_{j_1}\modlatreck[\idx] \circ \pathfunction$ is continuous, it follows that $\pointpartial{j_1}{t^*}=0.$

    On the other hand, $\pointpartial{j_1}{0} = \partial_{j_1} \modlatreck[\idx](\samplez{1}) \neq 0$ implies that $t^* \neq 0$ and there exists a convergent sequence $(t_n) \subseteq [0,t^*)$ with $\lim_{n\rightarrow \infty} t_n = t^*$ such that $\pointpartial{j_1}{t_n} \neq 0.$ Therefore, for any $j \neq j_1$, $\pointpartial{j}{t_n}=0$. From the continuity of $\partial_j \modlatreck[\idx] \circ \pathfunction$ we conclude that for any $j \neq j_1$, $\pointpartial{j}{t^*} = 0.$

    Subsequently, for any $j\in [K]$ (either $j\neq j_1$ or $j=j_1$) it holds that $\pointpartial{j}{t^*} = 0$. Thus, we get $\pointder{t^*} = 0$, which contradicts $\modlatrec$ being a diffeomorphism. Hence, there exists $\pi$ such that for any $\z, \idx$: $\partial_j \modlatreck[\idx](\z) \neq 0 \Leftrightarrow j = \pi(\idx)$.

    \paragraph{Step 2.} Secondly, we now prove that $\modlatrec$ acts slot-wise with permutation $\pi$, \ie for any $\idx$ there exists $ \latreck[\idx]$ \diffeo[1] around $\sZk[\pi(\idx)]$ such that for any $\z$, $\modlatreck[\idx](\z) = \latreck[\idx](\zk[\pi(\idx)])$. By \Defref{def:app_slot_identifiability} this would imply that $\aepair$ slot-identifies $\z$ on $\srestZ$ \wrt $\gtdec$.

    To see this, let $\samplez{1}, \samplez{2} \in \srestZ$ with $\samplezk{1}{\pi(\idx)} = \samplezk{2}{\pi(\idx)}$. To be proven: $\modlatreck[\idx](\samplez{1}) = \modlatreck[\idx](\samplez{2})$.
    Since $\srestZ$ is convex, the path $t\in [0,1] \mapsto \samplez{t} = \samplez{1} + t\cdot (\samplez{2}-\samplez{1})$ is inside $\srestZ$.
    Then 
    \begin{align}
        \modlatreck[\idx](\samplez{2}) - \modlatreck[\idx](\samplez{1}) &= \modlatreck[\idx](\samplez{t})\big|_0^1 = \int_0^1 \frac{d}{dt} [\modlatreck[\idx](\samplez{t})]dt \\
        &= \int_0^1 \der \modlatreck[\idx](\samplez{t})(\samplez{2}-\samplez{1})dt = \int_0^1 \sum_{j=1}^K \partial_j \modlatreck[\idx](\samplez{t})(\samplezk{2}{j}-\samplezk{1}{j})dt.
    \end{align}
    First using the fact that $\partial_j \modlatreck[\idx](\samplez{t}) \neq 0 \Leftrightarrow j=\pi(\idx)$ and then substituting $\samplezk{1}{\pi(\idx)} = \samplezk{2}{\pi(\idx)},$ we receive:
    \begin{equation}
        \modlatreck[\idx](\samplez{2}) - \modlatreck[\idx](\samplez{1}) = \int_0^1 \partial_{\pi(\idx)}\modlatreck[\idx](\samplez{t})(\samplezk{2}{\pi(i)}-\samplezk{1}{\pi(i)})dt = 0. \qedhere
    \end{equation}
\end{proof}

\subsection{Additivity and connection to Compositionality}\label{subsec:app_additivity}

This subsection presents a special subset of decoders that will allow \aemlong{s} to generalize compositionally in the sense of \Defref{def:app_compositional_generalization}.

\begin{repdefinition}{def:additivity}[Additivity]\label{app:def:additivity}
    A function $\gtdec:\sZ = \sZk[1]\times \ldots \times \sZk[K]\rightarrow\R^N$ is called \emph{additive on $\sZ$} if for any $\idx$ there exists $\objdec_\idx:\sZk \rightarrow \R^N$ such that
    \begin{equation}
        \gtdec(\z) = \sum_{\idx=1}^K\objdec_\idx(\z_\idx)\quad\text{holds for any $\z\in\sZ$}.
    \end{equation}
\end{repdefinition}

\begin{lemma}[Compositionality implies additivity]
    \label{lem:app_comp_implies_additivity}
    Let $\sZk \subseteq \R^{KM}$ be convex sets and let $\gtdec:\sZ = \sZk[1]\times \ldots \times \sZk[K]\rightarrow\R^N$ be \contdiff[2] (\ie, twice continuously differentiable) and compositional on $\sZ$ (in the sense of \Defref{def:app_compositionality}). Then $\gtdec$ is additive on $\sZ$.
\end{lemma}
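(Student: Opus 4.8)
The plan is to reduce the claim to the output coordinates of $\gtdec$ and then to show that compositionality forces every \emph{cross-slot} second partial derivative of each coordinate to vanish identically on $\sZ$; additivity then follows by integrating each coordinate along a path that moves one slot at a time. For the reduction, observe that it suffices to produce, for every $n \in [N]$, functions $u_\idx^{(n)} : \sZk \to \R$ with $\gtdec_n(\z) = \sum_{\idx=1}^{K} u_\idx^{(n)}(\zk)$ on $\sZ$, since taking the $n$-th component of $\objdec_\idx(\zk)$ to be $u_\idx^{(n)}(\zk)$ then recovers \Defref{app:def:additivity}. So I would fix $n$, write $g \defeq \gtdec_n \in \contdiffm[2](\sZ, \R)$, and work with $g$ from here on.

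The heart of the argument is the claim that for all slots $j \neq k$ and all scalar coordinates $a$ within slot $k$ and $b$ within slot $j$, the second partial $\partial_a \partial_b g$ vanishes identically on $\sZ$. To prove it I would let $U_\idx \defeq \{\z \in \sZ : \partial_\idx g(\z) \neq 0\}$ (relatively open in $\sZ$, where $\partial_\idx g$ denotes the slot-$\idx$ block of $\nabla g$); compositionality (\Defref{def:app_compositionality}) applied to the single coordinate $n$ says exactly that $U_1, \ldots, U_K$ are pairwise disjoint. On $U_k$ we have $\partial_j g \equiv 0$ by disjointness, so $\partial_a(\partial_b g) \equiv 0$ there; on the exterior $\sZ \setminus \overline{U_k}$ (also relatively open) we have $\partial_k g \equiv 0$, so $\partial_b(\partial_a g) \equiv 0$ there. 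The relative boundary $\overline{U_k} \setminus U_k$ is closed with empty interior in $\sZ$ (the boundary of a relatively open set always has empty interior), hence $U_k \cup (\sZ \setminus \overline{U_k})$ is dense in $\sZ$, and since $g \in \contdiffm[2]$ the function $\partial_a \partial_b g = \partial_b \partial_a g$ is continuous; a continuous function vanishing on a dense set vanishes everywhere, which proves the claim. I expect this to be the main obstacle: compositionality is a purely pointwise, local statement, and the delicate part is controlling $g$ on the shared boundary $\partial U_k$, where a priori neither ``$\partial_j g = 0$'' nor ``$\partial_k g = 0$'' holds---this is precisely where the $\contdiffm[2]$ hypothesis earns its keep, once through Clairaut's theorem to symmetrize the mixed partial and once to make it continuous.

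Granting the claim, I would conclude as follows. For any scalar coordinate $a$ in slot $k$, $\partial_a g$ has zero derivative along every direction outside slot $k$, so for each fixed $\zk$ it is constant on the convex (hence connected) set $\prod_{j \neq k} \sZk[j]$; thus the block $\partial_\idx g$ depends on $\z$ only through $\zk$. Fixing a basepoint $\z^0 \in \sZ$ and joining $\z^0$ to $\z$ by the path that successively varies $\zk[1], \ldots, \zk[K]$---each leg stays in $\sZ$ since each $\sZk$ is convex---the fundamental theorem of calculus gives $g(\z) - g(\z^0) = \sum_{\idx=1}^{K} W_\idx(\zk)$, where $W_\idx(\zk)$ is the line integral of $\partial_\idx g$ over the $\idx$-th leg and, by the previous sentence, depends only on $\zk$. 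Setting $u_\idx^{(n)} \defeq W_\idx$ for $\idx \in \{2, \ldots, K\}$ and absorbing the constant $g(\z^0)$ into $u_1^{(n)}$ establishes the reduced statement, and hence the lemma; differentiating under the integral sign would additionally show each $\objdec_\idx$ inherits the regularity of $\gtdec$.
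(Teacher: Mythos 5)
Your proof is correct and follows essentially the same route as the paper's: reduce to the scalar output coordinates $g = \gtdec_n$, show that compositionality together with $C^2$-regularity forces all cross-slot second partials to vanish, and then integrate along paths inside the convex domain to obtain the additive decomposition. The only differences are cosmetic --- the paper establishes the diagonal-Hessian step by contradiction via nested neighbourhoods of a hypothetical bad point, whereas you argue directly that the (Clairaut-symmetric, continuous) mixed partial vanishes on the dense set $U_k \cup (\sZ \setminus \overline{U_k})$, and you integrate along a slot-by-slot staircase path rather than a straight segment split via slot-wise potentials.
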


\begin{proof}[Proof of \Lemref{lem:app_comp_implies_additivity}]
    The proof is broken down into two steps. First, we prove that the coordinate functions of compositional functions have a diagonal Hessian. Second, we prove that real-valued functions defined on a convex set with diagonal Hessian are additive.
    
    \paragraph{Step 1.} Observe that $\gtdec$ is additive if and only if for any $p\in [N]$, $\gtdec_p$ is additive. Let $p\in [N]$ be arbitrary but fixed and let $\coordenc \defeq \gtdec_p$. To be proven: $\coordenc$ is additive. Note that since $\gtdec$ is \contdiff[2], $\coordenc$ is \contdiff[2] as well.

    We first prove that $\coordenc$ has a diagonal Hessian, \ie, for any $i,j\in [K], i\neq j$ we have $\partial^2_{ij} \coordenc(\z) = 0$ for any $\z$. Proving it indirectly, let us assume there exist $i\neq j$ and $\tilde{\z}$ such that $\partial^2_{ij}\coordenc(\tilde{\z}) \neq 0$.

    The function $\coordenc$ is \contdiff[2], thus $\partial^2_{ij}\coordenc$ is continuous. Therefore, there exists a neighborhood $V$ of $\tilde{\z}$ such that $\partial^2_{ij} \coordenc(\z) \neq 0$ for any $\z\in V$. Hence, $\partial_i \coordenc$ cannot be constant on $V$, for otherwise $\partial_j(\partial_i \coordenc) = \partial^2_{ij} \coordenc$ would be $0$. Consequently, there exists $\z^* \in V$ such that $\partial_i \coordenc(\z^*) \neq 0$. Again, $\partial_i \coordenc$ is continuous, hence there exists a neighborhood $W \subseteq V$ of $\z^*$ such that $\partial_i \coordenc(\z) \neq 0$ for any $\z\in W$.

    However, $\gtdec$ is compositional, hence either $\partial_i \coordenc(\z)$ or $\partial_j \coordenc(\z)$ is $0$. Therefore $\partial_j \coordenc(\z) = 0$ for any $\z\in W$. After taking the partial derivative with respect to slot $\zk[i]$, we receive that $\partial^2_{ij} \coordenc(\z) = \partial_i(\partial_j \coordenc(\z)) = 0$ for any $\z\in W \subseteq V$. This contradicts the fact that $\partial^2_{ij} \coordenc(\z) \neq 0$ for any $\z\in W$.

    \paragraph{Step 2.} Secondly, we prove that $\coordenc$ defined on $\sZ$ convex, having a diagonal Hessian, has to be additive. Observe that by slightly reformulating the property of a diagonal Hessian, we receive:
    \begin{equation}\label{eq:proof_additivity}
        \text{for any $\z, i$ and $j$:}\quad \partial_j [\partial_i \coordenc](\z) \neq 0 \implies j = i.
    \end{equation}
    Comparing \Eqref{eq:proof_additivity} to \Eqref{eq:proof_restricted_step1} from the proof of \Thmref{theo:app_identifiability_restricted}, we realize that the derivative of $\der \coordenc(\z)$ has a blockdiagonal structure, similar to $\modlatrec$ (except, the blocks may become $0$).
    By repeating the same argument from Step 2.\ of the proof of \Thmref{theo:app_identifiability_restricted}, we get that $\der \coordenc$ is a slot-wise ambiguity with the identity permutation, \ie for some \contdiff-diffeomorphisms $\dercoordenc$ we have $\partial_\idx \coordenc(\z) = \dercoordenc(\zk[\idx])$ for any $\z,\idx$.

    However, then let $\z, \samplez{0} \in \sZ$ and let $\pathfunction(t):[0,1] \rightarrow \sZ$ be a smooth path with $\pathfunction(0) = \samplez{0}, \pathfunction(1) = \z$ ($\sZ$ being convex, such a path exists). Then:
    \begin{align}
        \coordenc(\z) - \coordenc(\samplez{0}) &= \int_0^1 \frac{d}{dt} \big[\coordenc\big(\pathfunction(t)\big)\big]dt = \int_0^1 \der \coordenc\big(\pathfunction(t)\big) \, \pathfunction'(t)dt \\
        &= \int_0^1 \sum_{\idx=1}^K \partial_\idx \coordenc\big(\pathfunction(t)\big) \,\pathfunction'_\idx(t)dt = \sum_{\idx=1}^K \int_0^1 \dercoordenc\big(\pathfunction_\idx(t)\big)\,\pathfunction'_\idx(t)dt.
    \end{align}

    Functions $\dercoordenc$ are continuous; hence, they can give rise to an integral function $\tilde{\objdec}_\idx$. Using the rule of integration by substitution, we receive:
    \begin{equation}
        \coordenc(\z)-\coordenc(\samplez{0}) = \sum_{\idx=1}^K \tilde{\objdec}_\idx\big(\pathfunction(t)\big)\Big|_0^1 = \sum_{\idx=1}^K \big( \tilde{\objdec}_\idx(\zk[\idx]) - \tilde{\objdec}_\idx(\samplezk{0}{\idx}) \big).
    \end{equation}

    Denoting $\objdec_\idx(\zk[\idx]) = \tilde{\objdec}_\idx(\zk[\idx]) - \tilde{\objdec}_\idx(\samplezk{0}{\idx}) + \frac{1}{K} \coordenc(\samplez{0})$, we conclude that $\coordenc$ is additive, as
    \begin{equation}
        \coordenc(\z) = \sum_{\idx=1}^K \objdec_\idx(\zk[\idx]).
        \qedhere
    \end{equation}
    
\end{proof}

\subsection{Decoder Generalization}\label{subsec:app:decoder_generalization}

In this subsection we recall in a more precise format and prove \Thmref{theo:decoder_generalization}. However, before that, we also precisely introduce the \emph{slot-wise recombination space ($\isZ$) and -function ($\ilatrec$)}, where $\isZ$ is an extension of \Eqref{eq:z_prime}. The latter was only defined for the case when our \aemlong slot-identified $z$ on the training latent space.

\begin{definition}[Slot-wise recombination space and -function]\label{def:app_slotwise}
    Let $(\sZ, \srestZ, \gtdec)$ be a \genprocessabbrv and let $\aepair$ be an \aemlong. Let $\modsrestZ \defeq \modenc\big(\gtdec(\srestZ)\big)$. We call
    \begin{equation}
        \isZ \defeq \modsrestZk[1] \times \ldots \times \modsrestZk[K]
    \end{equation}
    the \emph{slot-wise recombination space}, where
    $\modsrestZk$ is the projection of the set $\modsrestZ$ (\cf \Defref{def:app_projection}).

    In the case when $\aepair$ slot-identifies $\z$ on $\srestZ$ \wrt generator $\gtdec$, let the \emph{slot-wise recombination function} be the concatenation of all slot-functions $\latreck(\zk[\pi(\idx)])$:
    \begin{equation}
        \ilatrec(\z) \defeq \big( \latreck[1](\zk[\pi(1)]), \ldots, \latreck[K](\zk[\pi(K)]) \big).
    \end{equation}
    The space of all values taken by $\ilatrec(\z)$ is:
    \begin{equation}
        \ilatrec(\sZ) = \latreck[1](\sZk[\pi(1)]) \times \cdots \times \latreck[K](\sZk[\pi(K)]).
    \end{equation}

    Since in this case $\latreck(\sZk[\pi(\idx)]) = \modsrestZk$, we have that $\isZ = \ilatrec(\sZ)$.
\end{definition}

\begin{reptheorem}{theo:decoder_generalization}[Decoder generalization]\label{theo:app_decoder_generalization}
    Let $(\sZ, \srestZ, \gtdec)$ be a POGP such that
    \begin{enumerate}[i)]
        \item $\gtdec$ is compositional on $\sZ$ and
        \item \label{enumitem:diffeo} $\gtdec$ is \diffeo[2] around $\sZ$.
    \end{enumerate}
    Let $\aepair$ be an \aemlong such that
    \begin{enumerate}[resume*]
        \item \label{enumitem:slot_identify} $\aepair$ slot-identifies $\z$ on $\srestZ$ \wrt $\gtdec$ and
        \item $\moddec$ is additive (on $\R^{KM})$.
    \end{enumerate}
    Then $\moddec$ generalizes in the sense that $\moddec\big(\ilatrec(\z)\big) = \gtdec(\z)$ holds for any $\z \in \sZ$. What is more, $\moddec$ is injective on $\isZ$ and we get ${\moddec(\isZ) = \gtdec(\sZ) = \sX}$.
\end{reptheorem}

\begin{proof}[Proof of \Thmref{theo:app_decoder_generalization}]
    Let $\srestX=\gtdec(\srestZ)$ and $\modsrestZ=\modenc(\srestX)$. Condition \emph{\ref{enumitem:slot_identify}} implies that $\aepair$ minimizes $\recloss(\srestX)$, or equivalently, because of \Lemref{lem:app_lrec_vanishes}, $\modenc:\srestX\to\modsrestZ$ and $\moddec:\modsrestZ\to\srestX$ invert each other. Also, the projection of $\modsrestZ$ to the $\idx$-th slot is $\modsrestZk = \latreck(\sZk[\pi(\idx)])$. Furthermore, from \Defref{def:app_slotwise} we know that
    $\modenc\big(\gtdec(\z)\big) = \ilatrec(\z)$, for any $\z \in \srestZ$.
    Hence, by applying $\moddec$ on both sides, we receive:
    \begin{equation}
        \label{eq:identifiability_full_proof_1}
        \gtdec(\z) = \moddec\big(\ilatrec(\z)\big) \text{ for any } \z\in \srestZ.
    \end{equation}

    Besides, $\moddec$ is additive. Due to \emph{\ref{enumitem:diffeo}} and \Lemref{lem:app_comp_implies_additivity}, $\gtdec$ is also additive on $\sZ$. More precisely: for any $\idx \in [K]$ there exist functions $\objdec_\idx:\sZk \rightarrow \R^N, \modobjdec_\idx: \latreck(\sZk[\pi(\idx)]) \rightarrow \R^N$ such that:
    \begin{align}
        \gtdec(\z) &= \sum_{\idx=1}^K \objdec_\idx(\zk) \text{ for any }\z\in \sZ \text{ and} \label{eq:gt_sum}\\
        \moddec(\modz) &= \sum_{\idx=1}^K \modobjdec_\idx(\modz_\idx) \text{ for any }\modz \in \isZ.\label{eq:mod_sum}
    \end{align}

    Substituting this into \Eqref{eq:identifiability_full_proof_1}, we receive:
    \begin{equation}
        \label{eq:identifiability_full_proof_2}
        \sum_{\idx=1}^K \objdec_\idx(\zk) = \sum_{\idx=1}^K \modobjdec_\idx\big(\ilatrec_\idx(\z)\big) = \sum_{\idx=1}^K \modobjdec_\idx\big(\latreck(\zk[\pi(\idx)])\big) \text{ for any } \z \in \srestZ.
    \end{equation}

    It is easily seen that functions $\objdec_\idx, \modobjdec_\idx$ are \contdiff. After differentiating \Eqref{eq:identifiability_full_proof_2} with respect to $\zk$, we receive:
    \begin{equation}
        \label{eq:identifiability_full_proof_3}
        \der \objdec_\idx (\zk) = \der \left( \modobjdec_{\pi^{-1}(\idx)} \circ \latreck[\pi^{-1}(\idx)] \right) (\zk) \text{ for any } \z \in \srestZ.
    \end{equation}
    Since $\srestZ$ is a slot-supported subset of $\sZ$, \Eqref{eq:identifiability_full_proof_3} holds for any $\zk \in \sZk.$ Let us denote ${\gamma_\idx = \modobjdec_{\pi^{-1}(\idx)} \circ \latreck[\pi^{-1}(\idx)] \in \contdiffm(\sZk).}$

    Since $\sZk$ is convex, let $\samplezk{0}{\idx} \in \sZk$ fixed and define the path $t \in [0,1] \mapsto u(t) = \samplezk{0}{\idx} + t(\zk - \samplezk{0}{\idx}) \in \sZk.$ Then:
    \begin{equation}
        \objdec_\idx (\zk) - \objdec_\idx(\samplezk{0}{\idx}) = \objdec_\idx\big(u(t)\big) \Big|_0^1 = \int_0^1 \der \objdec_\idx(u(t)) \, u'(t)dt 
    \end{equation}
    Due to \Eqref{eq:identifiability_full_proof_3}, we may continue:
    \begin{equation}
        \objdec_\idx (\zk) - \objdec_\idx(\samplezk{0}{\idx}) = \int_0^1 \der \gamma_\idx (u(t)) \, u'(t) dt =
        \gamma_\idx\big( u(t) \big) \Big|_0^1 = \gamma_\idx (\zk) - \gamma_\idx(\samplezk{0}{\idx}).
    \end{equation}
    Consequently, there exist constants $\constvector_\idx \in \R^N$ such that for any $\idx$:
    \begin{equation}
        \objdec_\idx(\zk) = \modobjdec_{\pi^{-1}(\idx)} \big( \latreck[\pi^{-1}(\idx)](\zk) \big) + \constvector_\idx \quad\text{ for any } \zk \in \sZk.
    \end{equation}
    After adding them up for all $\idx$ and using \Eqref{eq:gt_sum}:
    \begin{align}        
        \gtdec(\z) &= \sum_{\idx=1}^K \objdec_\idx(\zk) = \sum_{\idx=1}^K \Big( \modobjdec_{\pi^{-1}(\idx)} \big( \latreck[\pi^{-1}(\idx)](\zk) \big) + \constvector_\idx \Big) \\
        &= \sum_{\idx=1}^K \modobjdec_\idx\big(\latreck(\zk[\pi(\idx)])\big) + \sum_{\idx=1}^K \constvector_\idx.
    \end{align}
    Now, based on \Eqref{eq:mod_sum}, we receive:
    \begin{equation}\label{eq:identifiability_full_proof_4}
        \gtdec(\z) = \moddec \big( \latreck[1](\zk[\pi(1)]), \ldots, \latreck[K](\zk[\pi(K)]) \big) + \sum_{\idx=1}^K \constvector_\idx = \moddec\big(\ilatrec(\z)\big) + \sum_{\idx=1}^K \constvector_\idx
    \end{equation}
    holds for any $\z \in \sZ$. In particular, \Eqref{eq:identifiability_full_proof_4} holds for $\z \in \srestZ$, which together with \ref{eq:identifiability_full_proof_1} implies that 
    \[\sum_{\idx=1}^K \constvector_\idx = 0.\]

    Finally, we arrive at:
    \begin{equation}
        \gtdec(\z) = \moddec\big(\ilatrec(\z)\big) \text{ for any } \z\in \sZ.
    \end{equation}
    Note that previously \Eqref{eq:identifiability_full_proof_1} only held for $\z \in \srestZ$.

    Moreover, since $\ilatrec:\sZ \rightarrow \modsZ$ is a diffeomorphism, we see that
    \begin{equation}
        \gtdec(\sZ) = \moddec\big( \ilatrec(\sZ) \big) = \moddec(\isZ).
    \end{equation}
\end{proof}

\begin{remark}
    In the process of the proof, we have also proven the identifiability of the slot-wise additive components up to constant translations: There exists a permutation $\pi \in \symm$ and for any $\idx$, constants $\constvector_\idx$ such that
    \begin{equation}
        \objdec_\idx(\zk) = \modobjdec_{\pi^{-1}(\idx)} \big( \latreck[\pi^{-1}(\idx)](\zk) \big) + \constvector_\idx \quad\text{ for any } \zk \in \sZk.
    \end{equation}
\end{remark}

\subsection{Encoder Generalization}\label{subsec:app_encoder_generalization}

\begin{repdefinition}{def:compositional_consistency}[Compositional consistency]\label{def:app_compositional_consistency}
Let $\aepair$ be an \aemlong. Let $\isZ \subseteq \R^{KM}$ be an arbitrary set and let $q_\iz$ be an arbitrary continuous distribution with $\supp(q_\iz) = \isZ$. The following quantity is called the \emph{compositional consistency loss of $\aepair$ with respect to $q_\iz$}:
\begin{equation}\label{eq:compositional_consistency}
    \consloss\aepairpx[q_\iz] = \E_{\iz \sim q_\iz} \Big[\big\Vert \modenc\big(\moddec(\iz)\big) - \iz \big\Vert_2^2\Big].
\end{equation}

We say that $\aepair$ is \emph{compositionally consistent} if the \emph{compositional consistency loss} \wrt $q_\iz$ vanishes.
\end{repdefinition}

We now state a lemma that, similarly to \Lemref{lem:app_lrec_vanishes}, states that for $\consloss\aepairpx[q_\iz]$ to vanish, the exact choice of $q_\iz$ is irrelevant and that in this case the decoder (right) inverts the encoder on $\isZ$, meaning that the \aemlong is consistent on $\moddec(\isZ)$ to $\isZ$.

\begin{lemma}[Vanishing $\consloss$ implies invertibility on $\isZ$]\label{lem:app_lcons_vanishes}
    For $\aepair$ and $\isZ \subseteq \R^{KM}$ the following statements are equivalent:
    \begin{enumerate}[i)]
        \item there exists $q_\iz$ with $\supp(q_\iz) = \isZ$ such that $\consloss\aepairpx[q_\iz] = 0$;
        \item for any $q_\iz$ with $\supp(p_\x) = \isZ$ we have $\consloss\aepairpx[q_\iz] = 0$;
        \item $\modenc \circ \moddec |_\isZ = id$ or, in other words, $\aepair$ is consistent on $\moddec(\isZ)$ to $\isZ$.
    \end{enumerate}
\end{lemma}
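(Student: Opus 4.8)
The plan is to reproduce the proof of \Lemref{lem:app_lrec_vanishes} almost verbatim, with the roles of $\modenc$ and $\moddec$ interchanged and with the latent-space set $\isZ \subseteq \R^{KM}$ playing the part that the data-space set $\AEsubstitutesX$ played there. I would establish the cycle of implications $ii) \Rightarrow i) \Rightarrow iii) \Rightarrow ii)$. The implication $ii) \Rightarrow i)$ is immediate, once one notes that $\isZ$ is nonempty (in the intended application it is the product $\modsrestZk[1] \times \cdots \times \modsrestZk[K]$ of projections of a nonempty set), so at least one continuous distribution $q_\iz$ with $\supp(q_\iz) = \isZ$ exists.

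For $i) \Rightarrow iii)$: suppose $\consloss\aepairpx[q_\iz] = \E_{\iz \sim q_\iz}\big[\|\modenc(\moddec(\iz)) - \iz\|_2^2\big] = 0$ for some $q_\iz$ with $\supp(q_\iz) = \isZ$. The integrand is nonnegative, so $\modenc(\moddec(\iz)) = \iz$ holds $q_\iz$-almost surely; equivalently, the set $A \defeq \{\iz \in \R^{KM} : \modenc(\moddec(\iz)) = \iz\}$ has full $q_\iz$-measure. Because $\modenc,\moddec$ are continuous (\Defref{def:app_AE}), $A$ is closed, hence $A \supseteq \supp(q_\iz) = \isZ$, i.e.\ $\modenc \circ \moddec|_\isZ = id$. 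For $iii) \Rightarrow ii)$: if $\modenc \circ \moddec|_\isZ = id$ and $q_\iz$ is any continuous distribution with $\supp(q_\iz) = \isZ$, then $\modenc(\moddec(\iz)) - \iz = 0$ for every (non-random) $\iz \in \isZ$, so the integrand vanishes identically on the support and $\consloss\aepairpx[q_\iz] = 0$.

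To close the gap between statement $iii)$ and the ``consistent'' terminology of \Defref{def:app_AE}, I would finally note that $\modenc \circ \moddec|_\isZ = id$ forces $\moddec|_\isZ$ to be injective (it admits a left inverse), with $\modenc\big(\moddec(\isZ)\big) = \isZ$, and that for any $\x = \moddec(\iz) \in \moddec(\isZ)$ one has $\moddec\big(\modenc(\x)\big) = \moddec\big(\modenc(\moddec(\iz))\big) = \moddec(\iz) = \x$, so also $\moddec \circ \modenc|_{\moddec(\isZ)} = id$; thus $\modenc$ and $\moddec$ restrict to mutually inverse bijections between $\moddec(\isZ)$ and $\isZ$. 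I do not expect any genuine difficulty here; the one point deserving care — exactly as in \Lemref{lem:app_lrec_vanishes} — is that it is the continuity of $\modenc \circ \moddec$ that upgrades the ``almost everywhere'' equality into an ``everywhere on the closed support $\isZ$'' equality.
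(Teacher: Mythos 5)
Your proposal is correct and matches the paper's intended argument: the paper proves this lemma only by remarking that it is analogous to \Lemref{lem:app_lrec_vanishes}, and you carry out exactly that analogue, with the same cycle $ii) \Rightarrow i) \Rightarrow iii) \Rightarrow ii)$ and the same use of continuity to upgrade the almost-sure identity $\modenc(\moddec(\iz)) = \iz$ to an identity on all of $\supp(q_\iz) = \isZ$. Your closing paragraph spelling out why $iii)$ yields mutually inverse bijections between $\moddec(\isZ)$ and $\isZ$ is a welcome clarification of the ``consistent on $\moddec(\isZ)$ to $\isZ$'' terminology, but it does not change the route.
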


\begin{remark}
    The proof is analogous to the one of \Lemref{lem:app_lrec_vanishes}. Henceforth, in the context of vanishing consistency loss, we are going to denote $\consloss\aepairpx[q_\iz]$ simply either by $\consloss\aepairpx[\isZ]$ or $\consloss(\isZ)$.
\end{remark}

\begin{reptheorem}{theo:compositional_generalization}[Compositionally generalizing autoencoder]\label{theo:app_compositional_generalization}
    Let $(\sZ, \srestZ, \gtdec)$ be a POGP (\Defref{def:app_genprocess}) such that
    \begin{enumerate}[i)]
        \item $\srestZ$ is convex,
        \item $\gtdec$ is compositional and irreducible on $\sZ$ and
        \item \label{enumitem:diffeo2} $\gtdec$ is \diffeo[2] around $\sZ$.
    \end{enumerate}
    Let $\aepair$ be an \aemlong with $\srestX = \gtdec(\srestZ)$ (\Defref{def:app_AE}) such that
    \begin{enumerate}[resume*]
        \item \label{enumitem:minimization} $\aepair$ minimizes $\recloss\aepairpx[\srestX] + \lambda \consloss\aepairpx[\isZ]$ for some $\lambda > 0$, where $\isZ$ is the slot-wise recombination space (see definition of $\isZ$ in \Defref{def:app_slotwise}),
        \item $\moddec$ is compositional on $\modsrestZ \defeq \modenc(\srestX)$ and
        \item $\moddec$ is additive (on $\R^{KM})$.
    \end{enumerate}
    Then the \aemlong$\aepair$ generalizes compositionally \wrt $\srestZ$ in the sense of \Defref{def:app_compositional_generalization}. Moreover, ${\modenc:\sX\rightarrow\modsZ=\modenc(\sX)}$ inverts $\moddec: \isZ\rightarrow\sX$ and 
${\modsZ = \isZ = \latreck[1](\sZk[\pi(1)]) \times \cdots \times \latreck[K](\sZk[\pi(K)])}$.
\end{reptheorem}

\begin{proof}[Proof of \Thmref{theo:app_compositional_generalization}]
    Observe that assumption \emph{\ref{enumitem:minimization}} is equivalent to
    \begin{equation}
        \recloss\aepairpx[\srestX] + \lambda \consloss\aepairpx[\isZ] = 0,
    \end{equation}
    which may happen if and only if $\recloss\aepairpx[\srestX] = \consloss\aepairpx[\isZ] = 0$, \ie $\aepair$ minimizes both $\recloss(\srestX)$ and $\consloss(\isZ)$.

    Firstly, as 
    \begin{enumerate*}[label=\emph{\roman*}\hspace{0.05em})]
        \item $\gtdec$ is compositional on $\sZ$, and hence on $\srestZ$,
        \item $\moddec$ is compositional on $\modsrestZ \defeq \modenc(\srestX)$ and
        \item $\aepair$ minimizes $\recloss(\srestX)$,
    \end{enumerate*}
    we may conclude based on \Thmref{theo:app_identifiability_restricted} that $\aepair$ slot-identifies $\z$ on $\srestZ$. Consequently, the slot-wise recombination function $\ilatrec$ is well-defined.

    Secondly,
    \begin{enumerate*}[label=\emph{\roman*}\hspace{0.05em})]
        \item $\gtdec$ is compositional on $\sZ$;
        \item $\gtdec \in \Diffeoclass[2](\sZ)$;
        \item $\moddec$ is additive and
        \item $\aepair$ slot-identifies $\z$ on $\srestZ$.  
    \end{enumerate*}
    Therefore, \Thmref{theo:app_decoder_generalization} implies that $\moddec$ in injective on $\isZ$, $\moddec\big(\ilatrec(\z)\big) = \gtdec(\z)$ holds for any $\z \in \sZ$ and $\moddec(\isZ) = \gtdec(\sZ) = \sX$.

    Finally, from \Lemref{lem:app_lcons_vanishes} and the fact that $\aepair$ minimizes $\consloss(\isZ)$, we deduce that $\aepair$ is consistent on $\moddec(\isZ)$, meaning that $\modenc$ is injective on $\moddec(\isZ) = \sX$ and its inverse is $\moddec:\isZ \to \sX$. However, by definition, $\modenc(\sX) = \modsZ$. Consequently, we proved that $\modsZ = \isZ$.

    Furthermore, we recall that $\moddec\big(\ilatrec(\z)\big) = \gtdec(\z)$ holds for any $\z \in \sZ$.
    As $\moddec$ is invertible on $\isZ = \modsZ$ with inverse $\modenc$, we may pre-apply $\modenc$ on both sides and receive
    \begin{equation}
        \modenc\big(\gtdec(\z)\big) = \ilatrec(\z),\quad\text{for any $\z \in \sZ$},
    \end{equation}
    which proves that $\aepair$ also slot-identifies $\z$ on $\sZ$ and concludes our proof.
\end{proof}

\section{Experiment details}\label{app:experiments}

\subsection{Data generation}\label{app:data}
The multi-object sprites dataset used in all experiments was generated using DeepMind's Spriteworld renderer~\citep{spriteworld19}. Each image consists of two sprites where the ground-truth latents for each sprite were sampled uniformly in the following intervals: x-position in \([0.1, 0.9]\), y-position in \([0.2, 0.8]\), shape in $\{0, 1\}$ corresponding to $\{\text{triangle}, \text{square}\}$, scale in \([0.09, 0.22]\), and color (HSV) in \([0.05, 0.95]\) where saturation and value are fixed and only hue is sampled.

All latents were scaled such that their sampling intervals become equivalent to a hypercube \([0, 1]^{2\times5}\) (2 slots with 5 latents each) and then scaled back to their original values before rendering. The slot-supported subset $\srestZ$ of the latent space was defined as a slot-wise band around the diagonal through this hypercube with width $\delta=0.25$ along each latent slot, \ie,
\begin{equation}\label{eq:delta}
    \srestZ = \left\{(\z_1, \z_2) | \forall i \in [5]: (\z_1 - \z_2)_i \leq \sqrt{2}\delta\right\}.
\end{equation}
In this sampling region, sprites would almost entirely overlap for small \( \delta \). Therefore, we apply an offset to the x-position latent of slot \( \z_2 \). Specifically, we set it to \((x + 0.5)\mod 1\), where \(x\) is the sampled x-position.

The training set and ID test set were then sampled uniformly from the resulting region, $\srestZ$, while the OOD test set was sampled uniformly from $\sZ \setminus \srestZ$. Objects with Euclidean distance smaller than 0.2 in their position latents were filtered to avoid overlaps. The resulting training set consists of 100,000 samples, while the ID and OOD test set each consist of 5,000 samples. Each rendered image is of size $64\times64\times 3$.
\begin{figure}
    \centering
    \includegraphics[width=.8\linewidth]{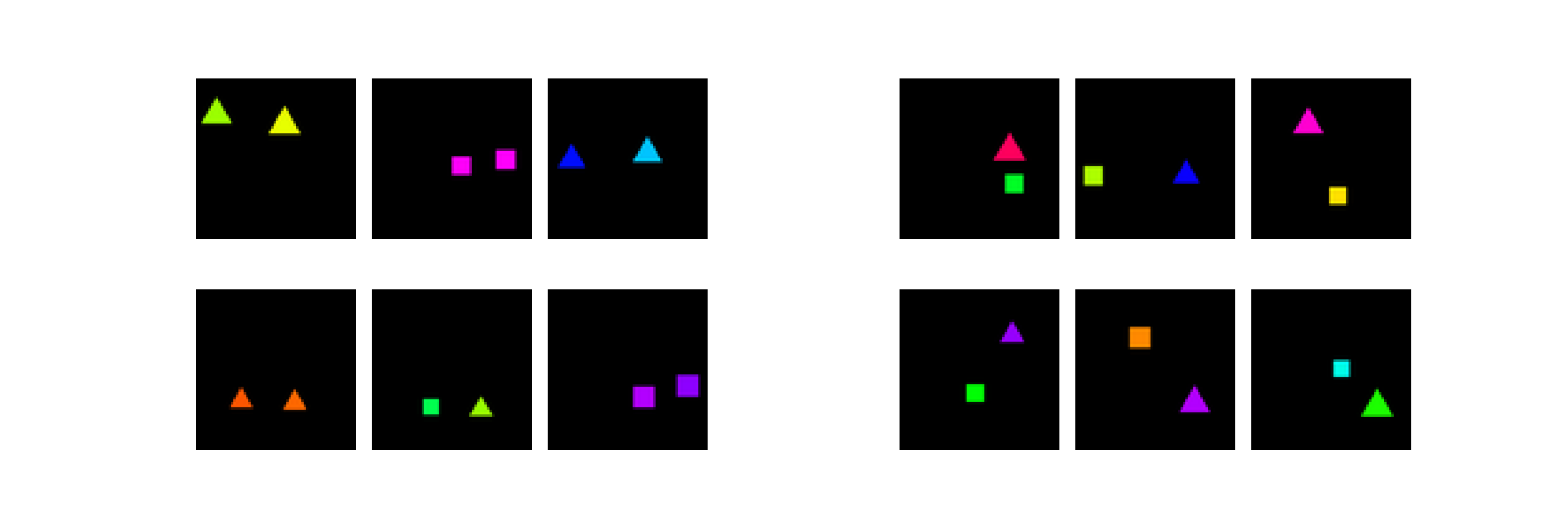}
    \vspace{-6mm}
    \caption{
        \textbf{Samples from dataset used in \Secref{sec:experiments}}. \textbf{Left}: In-distribution samples with latents sampled from the diagonal region. Objects are highly correlated in all latents and use an offset in their x-position to avoid direct overlaps. \textbf{Right}: Out-of-distribution samples with latents sampled from the off-diagonal region.
    }\label{fig:data_samples}
\end{figure}

\subsection{Model architecture and training settings}\label{app:model}
The encoder and decoder for the additive autoencoder used in \Secref{sec:experiments_theory} closely resemble  the architectures from \citet{burgess2018understanding}. The encoder consists of four convolutional layers, followed by four linear layers, and outputs a vector of dimensions $2 \times h$, where $h$ represents the latent size of a single slot and is set to 16. The decoder consists of three linear layers, followed by four transposed convolutional layers, and is applied to each slot separately; the slot-wise reconstructions are subsequently summed to produce the final output. The ReLU activations were replaced with ELU in both the encoder and decoder. The Slot Attention model in \Secref{sec:experiments_slot_attention} follows the implementation from the original paper~\citet{locatelloObjectCentricLearningSlot2020}, where hyperparameters were chosen in accordance with the Object-Centric library~\citep{Dittadi2021GeneralizationAR} for the Multi-dSprites setting, but with the slot dimension set to 16.

The additive autoencoder is trained for 300 epochs, while Slot Attention is trained for 400 epochs. Both models are trained using a batch size of 64. We optimize both models with AdamW~\citep{loshchilov2019decoupled} with a warmup of eleven epochs. The initial learning rate is set as \num{1e-7} and doubles every epoch until it reaches the value of \num{0.0004}. Subsequently, the learning rate is halved every 50 epochs until it reaches \num{1e-7}. Both models were trained using PyTorch~\citep{Paszke2019PyTorchAI}.

For the experiments verifying our theoretical results in \Secref{sec:experiments_theory}, we only included models in our results which were able to adequately minimize their training objective. More specifically, we only selected models that achieved reconstruction loss on the ID test set less than $2.0$. For the experiments with Slot Attention in \Secref{sec:experiments_slot_attention}, we only reported results for models that were able to separate objects ID where seeds were selected by visual inspection. This was done since the primary purpose of these experiments was not to see the effect of our assumptions on slot identifiability ID but instead OOD. Thus, we aimed to ensure that the effect of our theoretical assumptions on our OOD metrics were not influenced by the confounder of poor slot identifiability ID. Using these selection criteria gave us between 5 and 10 seeds for both models, which were used to compute our results.

If used, the composition consistency loss is introduced with $\lambda=1$ from epoch 100 onwards for the additive autoencoder model and epoch 150 onwards for Slot Attention. The number of recombined samples $\iz$ in each forward pass of the consistency loss is equal to the batch size for all experiments. In our compositional consistency implementation, normalization of both the latents \(\iz\) and the re-encoded latents \( \modenc(\moddec(\iz) ) \)  proved to be essential before matching them with the Hungarian algorithm and calculating the loss value. Without this normalization, we encountered numerical instabilities, which resulted in exploding gradients.

\subsection{Measuring reconstruction error}\label{app:rec_error}
For the heatmaps in \Figsref{fig:intro}{fig:heatmaps}, we first calculate the normalized reconstruction MSE on both the ID and OOD test sets. We then project the ground-truth latents of each test point onto a 2D plane with the x and y-axes corresponding to the color latent of each object. We report the average MSE in each bin. In this projection, some OOD points would end up in the ID region. Since we do not observe a difference in MSE for OOD points that are projected to the ID or OOD region, we simply filter those OOD points to allow for a clear visual separation of the regions.

When reporting the isolated decoder reconstruction error in \Figref{fig:intro}~A and \Figref{fig:heatmaps}~A~and~B, we aim to visualize how much of the overall MSE can attributed to only the decoder. Since the models approximately slot-identify the ground-truth latents ID on the training distribution, the MSE of the full autoencoder serves as a tight upper bound for the isolated decoder reconstruction error. Thus, in the ID region, we use this MSE when reporting the isolated decoder error. For the OOD region, however, the reconstruction error could be attributed to a failure of the encoder to infer the correct latent representations or to a failure of the decoder in rendering the inferred latents. To remove the effect of the encoder's OOD generalization error, we do the following: For a given OOD test image, we find two ID test images that each contain one of the objects in the OOD image in the same configuration. Because the encoder is approximately slot identifiable ID, we know that the correct representation OOD for each object in the image is given by the encoder's ID representation for the individual objects in both ID images. To get these ID representations, we must solve a matching problem to find which ID latent slot corresponds to a given object in each image. We do this by matching slot-wise renders of the decoder with the ground-truth slot-wise renders for each object based on MSE using the Hungarian algorithm~\citep{kuhnHungarianMethodAssignment1955}. Using this representation then allows us to obtain the correct representation for an OOD image without relying on the encoder to generalize OOD. The entire reconstruction error on this image can thus be attributed to a failure of the decoder to generalize OOD.

\subsection{Compositional contrast}\label{app:comp_contrast}
The compositional contrast given by~\citet{Brady2023ProvablyLO} is defined as follows:
\begin{definition}[Compositional Contrast] \label{def:compositional_contrast}
    Let $\gtdec: \sZ\to\sX$ be differentiable. The \emph{compositional contrast} of $\gtdec$ at $\z$ is
    \begin{equation} \label{eq:compositional_contrast}
        {C_{\mathrm{comp}}}(\gtdec, \z) = \sum\limits_{n=1}^{N} 
        \sum\limits_{\substack{k=1
        }}^{K}
        \sum\limits_{j=k+1}^{K}
        \left\|\frac{\partial f_{n}}{\partial \z_{k}}(\z)\right\|
        \left\|\frac{\partial f_{n}}{\partial \z_{j}}(\z)\right\|
        \,.
    \end{equation}
\end{definition}
This contrast function was proven to be zero if and only if $\gtdec$ is compositional according to \Defref{def:compositional}. The function can be understood as computing each pairwise product of the (L2) norms for each pixel's gradients \wrt any two distinct slots $k\neq j$ and taking the sum. This quantity is non-negative and will only be zero if each pixel is affected by at most one slot such that $\gtdec$ satisfies \Defref{def:compositional}. We use this contrast function to measure compositionality of a decoder in our experiments in \Secref{sec:experiments_theory}. More empirical and theoretical details on the function may be found in~\citet{Brady2023ProvablyLO}.

\subsection{Computational cost of the proposed consistency loss}\label{app:comp_cost_consistency_loss}
Computing the consistency loss as illustrated in \Figref{fig:consistency_loss} poses some additional computational overhead.
Namely, it requires additional passes through the encoder and decoder as well as computation of the encoder’s gradients \wrt the loss.
As outlined in \Secref{sec:method}, we computed the consistency loss on samples $\iz$ obtained by randomly shuffling the slots in the current batch, effectively doubling the batch size.
We found this to increase training time by a maximum of \SI{28}{\%} across runs.

For two slots, it would be possible to sample up to $b(b-1)$ novel combinations of the slots within the given batch, where $b$ is the batch size. This number increases combinatorially to $\frac{b!}{(b-n)!}$ for $n$ slots, which could pose a severe computational overhead. While our experiments demonstrate that the loss works well with just $b$ samples, \Appref{app:more_slots} also shows that it does not scale well to more than two slots, and drawing more samples might be a way to remedy this.

On the other hand, there might be ways to draw samples more effectively. One approach could be to sample slot combinations in proportion to their value \wrt the consistency loss or to sample combinations according to their likelihood under a prior over latents. Using a likelihood could avoid sampling implausible combinations; however, such a scheme is challenging as it relies on the likelihood being valid for OOD combinations of slots. Another possibility would be to include heuristics to directly filter combinations based on a priori knowledge of the data-generation process, \eg., to filter objects with similar coordinates which would intersect.

\subsection{Using more realistic Datasets}
Extending our experiments from the sprites dataset to more realistic data poses two main challenges. Firstly, in real-world datasets one generally does not have access to the ground-truth latents making our evaluation schemes inapplicable. Secondly, even if access to ground-truth latent information is available, our experiments require being able to sample latents densely from a slot-supported subset. Specifically, our experiments rely on sampling from a diagonal strip in the latent space with small width. If such a region were sub-sampled from an existing dataset, this would leave only a very small number of data points which are insufficient to train a model. To this end, our experiments require access to the ground-truth renderer for a dataset such that latents can be sampled densely. This is not available in most cases, however. An interesting avenue to address this would be to leverage recent rendering pipelines such as Kubric~\citep{greff2022kubric} to create more complex synthetic datasets. We leave this as an interesting direction for future work.

\section{Additional experiments and figures}\label{app:additional}
This section provides additional experimental results to the main experiments from \Secref{sec:experiments}.
\subsection{Impact of training region size}
We ablate the impact of the size of slot-supported subset on OOD metrics in \Figref{fig:delta_sweep} by varying the width $\delta$ of the slot-wise band around the diagonal (see \Eqref{eq:delta}). All models use an  additive decoder and differ in whether they optimize the consistency loss (\Defref{def:compositional_consistency}). We see that models which do not optimize the loss require an increasingly large $\delta$ in order to achieve high OOD reconstruction and identifiability scores, while models which do optimize the loss, achieve consistently high scores on OOD metrics across all values of $\delta$.
\begin{figure}
    \centering
    \includegraphics[width=\linewidth]{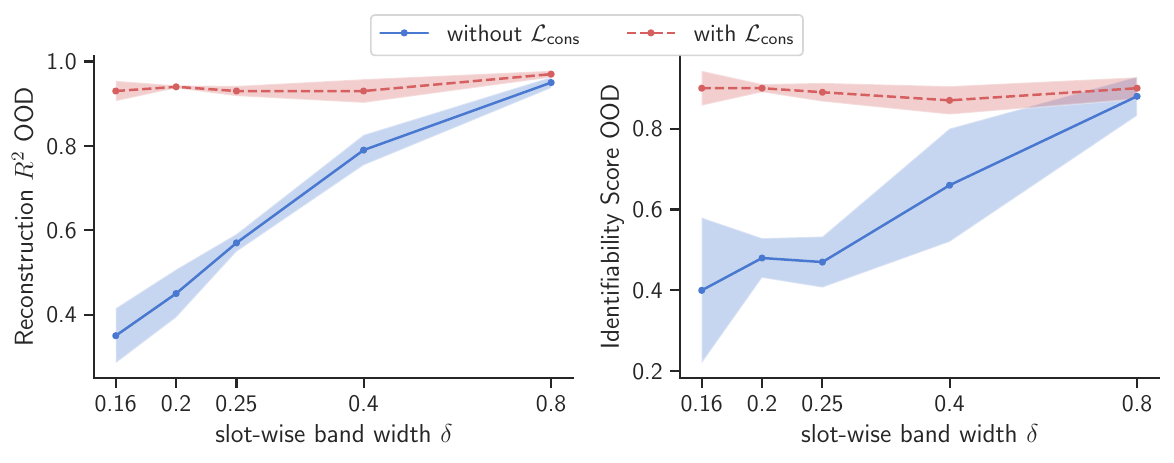}
    \vspace{-6mm}
    \caption{
        \textbf{OOD Reconstruction quality and slot identifiability as a function of training region size}.
        The width $\delta \in (0, 1)$ of the slot-wise band around the diagonal (see \Eqref{eq:delta}) is 0 if $\srestZ$ is a line and 1 if $\srestZ = \sZ$; experiments in \Secref{sec:experiments} used $\delta=0.25$.
        In the absence of the consistency loss (\Defref{def:compositional_consistency}), a large $\delta$ is required for models to achieve high OOD performance for reconstruction and slot identifiability. In contrast, models trained with the consistency loss yield consistently high performance across all $\delta$. Results are averaged over at least four random seeds.
    }\label{fig:delta_sweep}
\end{figure}

\subsection{Violating slot-supportedness}
We illustrate the effect of violating the assumption that $\srestZ$ is a \emph{slot-supported} subset of $\sZ$ (recall \Defref{def:marginal_support}) in \Figref{fig:gap} on reconstruction loss for an additive autoencoder trained with consistency loss. To do this, we create a gap in the slot-supported subset $\srestZ$ by removing all occurrences of objects with a hue-latent in the interval $(0.5, 0.8)$. We can see in \Figref{fig:gap} that this leads to poor reconstruction performance in the region containing the gap which propagates to the OOD regions as well.
\begin{figure}
    \centering
    \includegraphics[width=.6\linewidth]{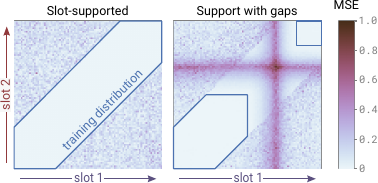}
    \caption{
        \textbf{Violating slot-supportedness prevents OOD generalization}.
        We retrain the additive autoencoder from \Secref{sec:experiments_theory} on data arising from a latent subset $\srestZ$ in which all occurrences of objects with a hue-latent in the interval $(0.5, 0.8)$ have been removed. This leads to a cross-shaped gap in the latent subset $\srestZ$ which can be visualized via a 2D projection of the latent space (see \Appref{app:rec_error} for details) where the x- and y-axes correspond to the hue-latent of each object (right).
        Compared to a model trained without this gap (left), we can see that the model is unable to reconstruct ID samples in this gap (\ie, samples where either object has this hue), and this error propagates outward to OOD samples.
    }\label{fig:gap}
\end{figure}

\subsection{Impact of more than 2 objects on consistency loss}\label{app:more_slots}
We also examine how the consistency loss scales as the number of objects in the training data is increased from two to three and four. We find that as the number of objects grows, optimization of the consistency loss becomes more challenging (\Figref{fig:bar_plots},~bottom~left) which makes sense considering that the number of possible slot combinations grows combinatorially with the number of slots. This, in turn, leads to poor OOD reconstruction quality (\Figref{fig:bar_plots},~top~right) which prevents the encoder from slot-identifying the ground-truth latents (\Figref{fig:bar_plots},~bottom~right). As hypothesized in \Appref{app:comp_cost_consistency_loss}, more principled schemes for sampling slot combinations could mitigate these scaling issues.

\begin{figure}
    \centering
    \includegraphics[width=.6\linewidth]{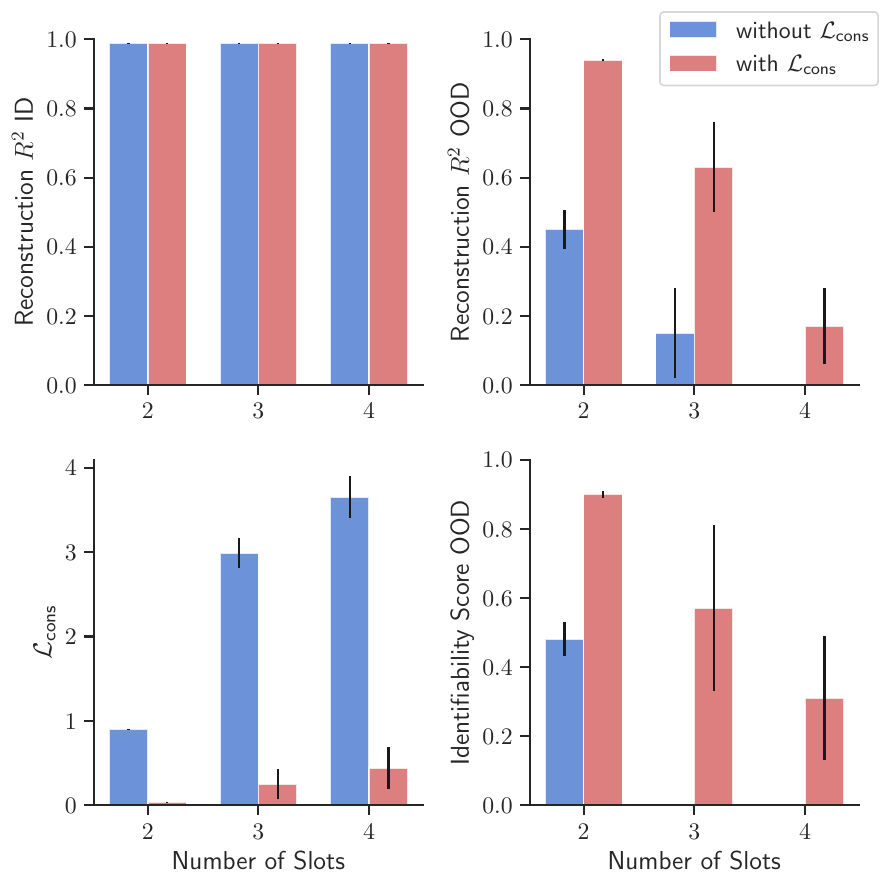}
    \caption{
        \textbf{Impact of number of objects on consistency loss}.
        We measure how the consistency loss scale as the number of objects in the training data is increased from two to three and four. We measure this for additive autoencoders which explicitly optimize the consistency loss (red) and models which do not optimize it (blue). \textbf{Top and Bottom left}: We can see that the ID reconstruction remains high as the number of objects grow, but the consistency loss increases steeply across models. \textbf{Top right}: Consequently, the OOD reconstruction quality decreases as the number of objects increases. \textbf{Bottom right}: This then prevents the encoder from slot-identifying the ground-truth latents OOD. However, training with the consistency loss still yields generally better results than training without it. All results are averaged over at least four random seeds. The consistency loss is normalized by the number of slots, and $R^2$ scores are clipped to zero.
    }\label{fig:bar_plots}
\end{figure}
\end{document}